\documentclass[letterpaper, 10 pt, conference]{IEEEconf}
\IEEEoverridecommandlockouts

\usepackage{cite}
\usepackage{amsmath,amssymb,amsfonts}
\usepackage{graphicx}
\usepackage{textcomp}
\usepackage{xcolor}

\usepackage{algorithm}
\usepackage{algpseudocode}
\usepackage{subcaption}
\usepackage[font=small]{caption}
\usepackage{booktabs}
\usepackage{multirow}

\usepackage{hyperref}
\hypersetup{
    colorlinks=true,
    linkcolor=black,
    urlcolor=black
}

\renewcommand{\QED}{\QEDopen}

\usepackage{amsthm}

\newtheorem{problem}{Problem}

\theoremstyle{definition}
\newtheorem{definition}{Definition}

\theoremstyle{plain}
\newtheorem{lemma}{Lemma}

\theoremstyle{plain}
\newtheorem{proposition}{Proposition}

\theoremstyle{remark}
\newtheorem{remark}{Remark}

\def\BibTeX{{\rm B\kern-.05em{\sc i\kern-.025em b}\kern-.08em
    T\kern-.1667em\lower.7ex\hbox{E}\kern-.125emX}}
    
\begin{document}

\title{\huge Efficient Multi-Objective Planning with Weighted Maximization using Large Neighbourhood Search}

\author{Krishna Kalavadia, Shamak Dutta, Yash Vardhan Pant, and Stephen L. Smith%
\thanks{This research was undertaken, in part, thanks to funding from the Canada Research Chairs Program, and in part by the Natural Sciences and Engineering Research Council of Canada (NSERC). 
\protect\\ \indent The authors are with the Department of Electrical and Computer Engineering, University of Waterloo, Waterloo, Canada (e-mails: \{kkalavad, shamak.dutta, yash.pant, stephen.smith\}@uwaterloo.ca).%
\protect\\ \indent
Code available at: \url{https://github.com/CL2-UWaterloo/WM-LNS}.}}

\maketitle

\begin{abstract}
Autonomous navigation often requires the simultaneous optimization of multiple objectives. The most common approach scalarizes these into a single cost function using a weighted sum, but this method is unable to find all possible trade-offs and can therefore miss critical solutions. An alternative, the weighted maximum of objectives, can find all Pareto-optimal solutions, including those in non-convex regions of the trade-off space that weighted sum methods cannot find. However, the increased computational complexity of finding weighted maximum solutions in the discrete domain has limited its practical use. To address this challenge, we propose a novel search algorithm based on the Large Neighbourhood Search framework that efficiently solves the weighted maximum planning problem. Through extensive simulations, we demonstrate that our algorithm achieves comparable solution quality to existing weighted maximum planners with a runtime improvement of 1-2 orders of magnitude, making it a viable option for autonomous navigation.
\end{abstract}
\hfill 


\section{Introduction}
In many scenarios, autonomous robots optimize multiple, possibly competing, objectives such as path length, obstacle clearance, and energy efficiency. Solving a multi-objective optimization problem does not yield a single unique solution but a set of 
\textit{Pareto-optimal} solutions where for each solution, its individual objectives cannot be improved without sacrificing performance in another \cite{branke_multiobjective_2008}. This set of solutions is known as the \textit{Pareto front}. One approach to describe the trade-offs between objective functions is to scalarize them, providing a single cost function that can be optimized.

A common scalarization method is to define the cost function as a \textit{weighted sum} (WS) of the objectives, where the weights are tuning parameters chosen by an external decision maker. Although widely used, the WS method is unable to return solutions that lie in the non-convex region of the Pareto front, regardless of the weights chosen \cite{branke_multiobjective_2008}. An alternative approach is to take the \textit{weighted maximum} (WM) of objectives, also known as Chebyshev scalarization. This method is able to find all Pareto-optimal solutions, including those in non-convex regions \cite{branke_multiobjective_2008}.

Figure \ref{fig:moo_pipeline} illustrates the scalarization approach to solving multi-objective path planning problems. Consider a mobile robot navigating an indoor environment. An external decision maker defines the importance of various objectives, such as path length and minimum distance to obstacles (obstacle closeness), and the planner returns a Pareto-optimal path. The WS scalarization cannot find a path between the obstacles because it corresponds to a trade-off that lies in the non-convex region of the Pareto front. This highlights the benefits of the WM scalarization, as important trade-offs can be missed by the WS. 

\begin{figure}[t]
    \centering
    \includegraphics[height=5.45cm]{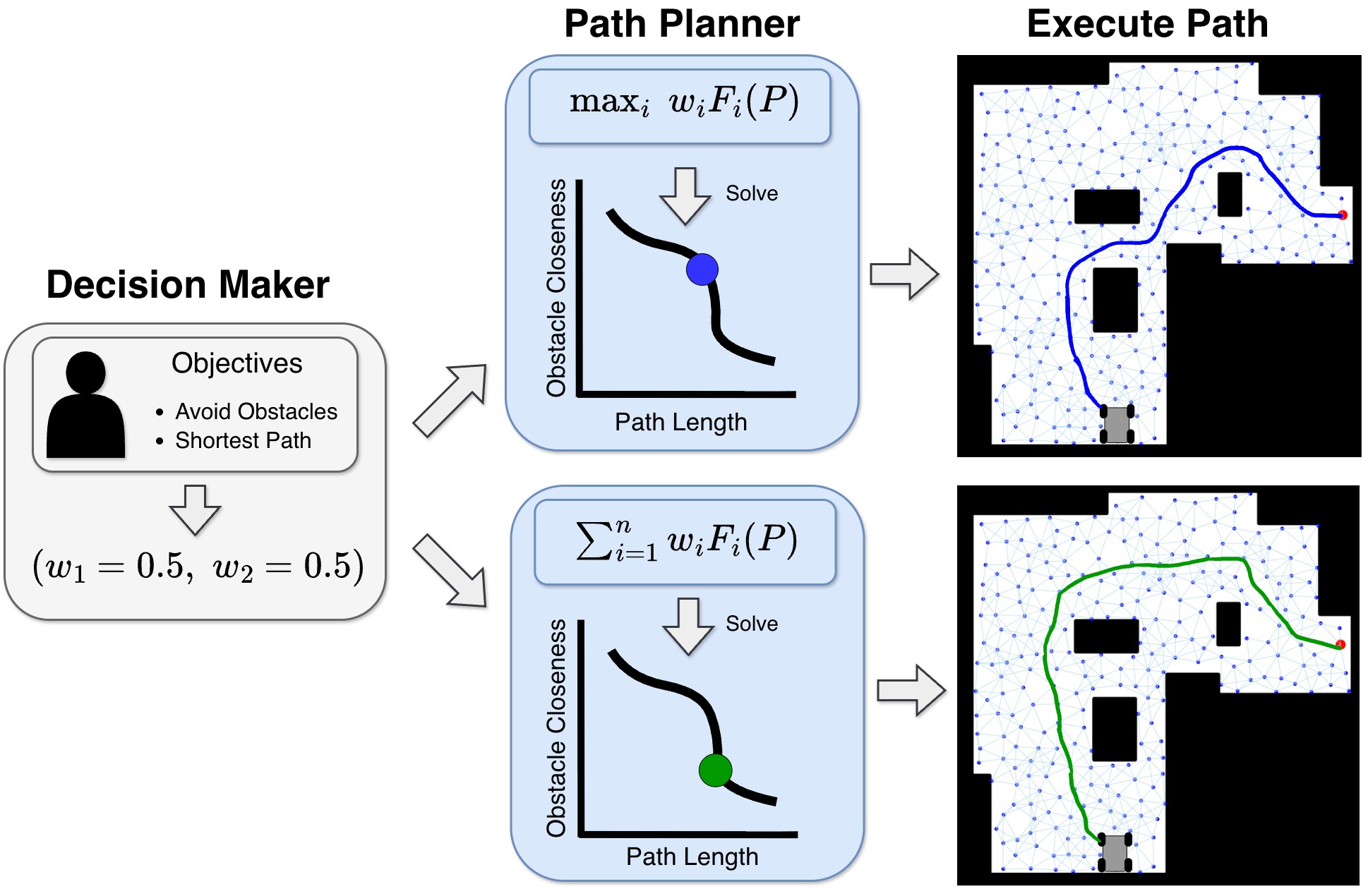}
    \caption{\small Solving multi-objective path planning problems via scalarization. An external decision maker specifies objective preferences as weights $w_1, w_2$. The path planner solves the scalarized problem and returns a path for the robot to execute. The WS cannot return the solution matching the desired objective preferences because it lies in the non-convex region of the Pareto front.}
    \label{fig:moo_pipeline}
    \vspace{-1mm}
\end{figure}

The authors of \cite{wilde_scalarizing_2024} present a formal framework for WM-based planning and demonstrate its potential in both continuous and discrete domains. However, in discrete space, the WM formulation makes the problem NP-hard \cite{wilde_scalarizing_2024}, while the WS problem remains solvable in polynomial time. 

To address this gap, we propose an efficient and novel heuristic algorithm that leverages Large Neighbourhood Search. The proposed algorithm, WM-LNS, reduces the computational time required to solve discrete WM planning problems by 1-2 orders of magnitude. We demonstrate the effectiveness of WM-LNS through extensive simulation studies and aim to remove the computational barrier that limits the wider application of WM methods for robot planning.

\subsection{Contributions} First, we study the limitations of the WS scalarization for multi-objective optimization. We provide a worst-case bound for approximating the WM with the WS and prove NP-hardness of finding the best WS approximation. Second, we propose a novel LNS algorithm to solve the WM planning problem that exploits the property that the Pareto fronts of WM subproblems can have convex geometry and are therefore solvable by the WS. Additionally, through an extensive set of experiments, we demonstrate that the proposed algorithm can find high-quality WM solutions with a runtime improvement of 1-2 orders of magnitude across various planning tasks. 

\subsection{Related Work} \textit{Multi-Objective Robot Planning:} The most common approach to multi-objective planning is to scalarize the objective functions as a weighted sum. This approach is used across various robotic applications from autonomous driving \cite{yang_automated_2020, lim_hybrid_2021}, mobile robotics \cite{lu_motion_2020, lindqvist_exploration-rrt_2021, naazare_online_2022}, and modelling user preferences in human-robot interaction \cite{sadigh-rss17, wilde_improving_2020}.

An alternative approach to scalarization is to compute a set of solutions that covers the entire Pareto front, including those in the non-convex region. The \(\epsilon\)-constraint method involves selecting one objective to optimize and converting the remaining objectives to constraints with some upper-bound on their value \cite{branke_multiobjective_2008}. The Adaptive Weighted Sum Method expands on the standard WS method by iteratively adding equality constraints, forcing new solutions to cover non-convex and sparsely sampled regions of the Pareto front \cite{kim_adaptive_2005}, \cite{kim_adaptive_2006}. A more popular method for computing a set of solutions covering the Pareto front is to leverage evolutionary algorithms \cite{ren_multi-objective_2019, zheng_gaussian_2024}, which maintain a population of non-dominated solutions. However, these methods address a different class of robotics problems than our paper, as they approximate the full set of trade-offs between objectives. They do not provide a mechanism for a decision maker to explicitly target a particular preferred solution \cite{10599820}.

Addressing the limitations of the WS is the WM scalarization. The authors of \cite{wilde_scalarizing_2024} provide a framework for solving multi-objective planning problems with WM cost functions. In continuous space, the WM method is not fundamentally harder to solve than the WS method. However, in discrete space, the WM problem is NP-hard while the WS problem is solvable in polynomial time \cite{wilde_scalarizing_2024}. Our paper addresses this increased complexity and proposes a novel approach to efficiently solve WM planning problems in discrete space. 

\textit{Large Neighbourhood Search:} Our paper is also built upon Large Neighbourhood Search, a stochastic meta-heuristic framework used in combinatorial optimization. LNS operates by iteratively destroying and repairing a solution until a stop condition is met \cite{shaw_using_1998}. The authors of \cite{ropke_adaptive_2006} extended the framework, introducing an adaptive LNS which has been applied to various problems such as team orienteering  \cite{kim2013augmented}, vehicle routing \cite{pisinger_general_2007, azi2014adaptive}, and multi-agent path finding \cite{li_anytime_2021}. Our work provides the first adaptive LNS algorithm for solving multi-objective robot planning problems. 

\section{Problem Formulation}
We consider robot navigation in an environment discretized into a connected graph \(G = (V, E)\) with vertex set \(V\) and edge set \(E\). Let \(\mathcal{P}_{s,g}\) be the set of all simple paths starting at vertex \(s\) and ending at a goal vertex \(g\). We look to minimize a set of objectives \(F_1, \dots, F_n\) where each \(F_{i}:\mathcal{P}_{s,g}\to\mathbb{R}_{\geq0}\).  Each edge \(e \in E\) has \(n\) non-negative traversal costs \(\mathbf{f}(e) = (f_1(e), \dots, f_n(e))\). The cost vector of a path \(P\) with \(k\) edges, \(e_1,e_2,\ldots,e_k\), is given by \(\mathbf{F}(P ) = (F_1(P), \dots, F_n(P))\) where \(F_i(P) = f_i(e_1)+ f_i(e_2) +\dots +f_i(e_k) \). The set of all cost vectors is defined as \(\mathcal{F}_{s,g} := \{\mathbf{F}(P) \in \mathbb{R}^n_{\geq 0} \;|\; P \in \mathcal{P}_{s,g} \}\).

\begin{definition}[Pareto-optimality]\label{def:pareto}
Consider two solution paths \(P_1\) and \(P_2\). We say \(P_1\) \textit{dominates} \(P_2\) if for all \(i\in\{1,\dots,n\}\), \(F_{i}(P_1)\le F_{i}(P_2)\) and there exists some \(j\in\{1,\dots,n\}\) where \(F_{j}(P_1)<F_{j}(P_2)\). A solution \(P^*\) is \textit{Pareto-optimal} if there exists no other solution \(P'\in \mathcal{P}_{s,g}\) such that \(P'\) dominates \(P^*\). The set of all Pareto-optimal solutions is the \textit{Pareto front}, and an algorithm is \textit{Pareto-complete} if it can discover the entirety of the Pareto front.
\end{definition}

A Pareto-optimal solution \(P^* \in \mathcal{P}_{s,g}\) is one that minimizes the set of objectives  
\begin{equation}
  \min_{P \in \mathcal{P}_{s,g}} \; \{F_{1}(P),F_{2}(P),\,\dots,\,F_{n}(P)\}. \label{eq:moo_problem}
\end{equation}

Solving \eqref{eq:moo_problem} results in a set of Pareto-optimal solutions. Our approach is to consider a scalarization of \eqref{eq:moo_problem} using a set of non-negative weights specified by an external decision maker. Given \(\textbf{w} =\{ w_1,\dots,w_n\} \in \mathcal{W}\), where \(\mathcal{W} \;=\; \{\, \mathbf{w}\in \mathbb{R}_{\ge 0}^n \; | \; \sum_{i=1}^{n} w_i = 1 \}\) denotes the weight space (equivalent to the unit simplex), the most common cost scalarization is the weighted sum (WS)
\begin{equation}
    \text{WS}(P) := \sum_{i = 1}^{n} w_i F_i(P). \label{eq:ws_cost}
\end{equation}
For a fixed weight, the optimal WS solution to \eqref{eq:moo_problem} is denoted \( P_{\text{WS}} = \arg\min_{P \in \mathcal{P}_{s,g}} \text{WS}(P)\) and the set of WS solutions optimal for \textit{some} weight is denoted \( \mathcal{P}_{\text{WS}} \). 
An alternative approach is the weighted maximum (WM) cost
\begin{equation}
    \text{WM}(P) := \max_{i} \;  w_i F_i(P). \label{eq:wm_cost}
\end{equation}
Similarly, for a fixed weight, the optimal WM solution is \( P_{\text{WM}} = \arg\min_{P \in \mathcal{P}_{s,g}} \text{WM}(P)\) and the set of WM solutions optimal for \textit{some} weight is denoted \(\mathcal{P}_{\text{WM}}\).

The WM scalarization is capable of returning a richer set of solutions describing the various trade-offs between objective functions \cite{branke_multiobjective_2008}.
Therefore, in this paper, we focus on solving planning problems with a WM cost function.

\begin{problem} \label{def:problem2}
Given weights \(\textbf{w} \in \mathcal{W}\) and a connected graph \(G\) with start and goal vertices \(s\) and \(g\) respectively, find a path \(P^* \in \mathcal{P}_{s,g}\) that solves
\[
\min_{P \in \mathcal{P}_{s,g}} \; \max_{i} \; w_i F_i(P).
\]
\end{problem}

To avoid obtaining \textit{weakly} Pareto-optimal solutions, a tie-breaker term, \(\rho\ \sum_{i = 1}^{n}F_i(P)\), can be added to the cost function, where \(\rho > 0\) is a sufficiently small constant \cite{branke_multiobjective_2008, wilde_scalarizing_2024}. The WM formulation is Pareto-complete, that is, for any Pareto-optimal path \(P^*\), there exists a choice of weights such that \(P^*\) is a solution to Problem \ref{def:problem2} \cite{branke_multiobjective_2008, wilde_scalarizing_2024}. Unfortunately, solving this problem directly is NP-hard \cite{wilde_scalarizing_2024}.

\section{Linking Weighted Maximization and Summation}
Given that the WM problem is NP-hard while the WS problem can be solved in polynomial time, a natural question is how well the WS scalarization can approximate the WM scalarization. In Section~\ref{subsec:limitations}, we explore the fundamental limits of the WS scalarization by i) providing a worst-case bound for using the WS to approximate the WM scalarization and ii) proving that finding the set of weights that achieves the best approximation is NP-hard. However, in Section~\ref{subsec:convex-subproblem}, we motivate our solution approach by demonstrating how to solve the WM problem by solving a \emph{sequence of sub-problems} using the WS scalarization.

\subsection{Limitations of the Weighted Sum} \label{subsec:limitations}

Solving \eqref{eq:moo_problem} with the WS scalarization is geometrically equivalent to finding a point of contact on the boundary of the convex hull of \(\mathcal{F}_{s,g}\) with a supporting hyperplane defined by the normal vector \((w_1, \dots, w_n) \in \mathcal{W}\) that minimizes the linear function defined by the weights. 

\begin{lemma}\label{lem:lemma1}
Let \( \mathcal{P}_{\text{SPO}} \subseteq \mathcal{P}_{s,g}\) be the set of all paths whose cost vectors are non-dominated and lie on the boundary of the convex hull of \(\mathcal{F}_{s,g}\). Then, \(\mathcal{P}_{\text{WS}} = \mathcal{P}_{\text{SPO}}\). 
\end{lemma}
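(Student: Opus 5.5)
We prove the two inclusions separately, using the supporting-hyperplane picture described just before the statement. Let $C=\mathrm{conv}(\mathcal{F}_{s,g})$. Since $\mathcal{P}_{s,g}$ consists of simple paths in a finite graph, $\mathcal{F}_{s,g}$ is finite and $C$ is a polytope. Every linear functional $\mathbf{w}^\top\mathbf{x}$ is minimized over $C$ at a point of $\mathcal{F}_{s,g}$ (a vertex), and $\min_{P}\text{WS}(P)=\min_{\mathbf{x}\in C}\mathbf{w}^\top\mathbf{x}$.

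\emph{($\mathcal{P}_{\text{WS}}\subseteq\mathcal{P}_{\text{SPO}}$).} Let $P$ be WS-optimal for some $\mathbf{w}\in\mathcal{W}$. The hyperplane $\{\mathbf{x}:\mathbf{w}^\top\mathbf{x}=\text{WS}(P)\}$ supports $C$, with $C$ in the halfspace $\mathbf{w}^\top\mathbf{x}\ge\text{WS}(P)$. Hence $\mathbf{F}(P)$ lies on the boundary of $C$. For non-dominance, suppose $P'$ dominates $P$. If all $w_i>0$, then $\mathbf{w}^\top\mathbf{F}(P')<\mathbf{w}^\top\mathbf{F}(P)$, a contradiction. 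If some weights are zero, this fails: WS optimizers may only be weakly Pareto-optimal. We resolve this by interpreting $P_{\text{WS}}$ with the tie-breaker convention used in the problem formulation. Among minimizers of $\text{WS}$ we select one that also minimizes $\sum_i F_i$, or equivalently we perturb to the strictly positive weight $\mathbf{w}+\rho\mathbf{1}$ renormalized, for small $\rho$. Because $\mathcal{F}_{s,g}$ is finite, for small enough $\rho$ the perturbed minimizer is still a $\mathbf{w}$-minimizer and is non-dominated. This degenerate case is where I expect the most care to be needed, and I would state the convention explicitly.

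\emph{($\mathcal{P}_{\text{SPO}}\subseteq\mathcal{P}_{\text{WS}}$).} Let $\mathbf{x}=\mathbf{F}(P)$ be non-dominated and on $\partial C$. By the supporting hyperplane theorem there is a nonzero $\mathbf{a}$ with $\mathbf{a}^\top\mathbf{y}\ge\mathbf{a}^\top\mathbf{x}$ for all $\mathbf{y}\in C$. The main obstacle is showing that $\mathbf{a}$ can be chosen with $\mathbf{a}\ge 0$; after normalizing, this gives $\mathbf{a}\in\mathcal{W}$, so $P$ is WS-optimal for that weight. Nonnegativity does not hold for an arbitrary supporting hyperplane: a non-dominated point can sit on a ``back'' facet of $C$ whose outer normal has mixed signs, for instance the upper-right side of a triangle. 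So the boundary must be read as the lower-left, or Pareto, boundary. I would handle this by working with $C^+=C+\mathbb{R}^n_{\ge0}$, whose supporting hyperplanes all have nonnegative normals, since $C^+$ is closed under adding the recession cone $\mathbb{R}^n_{\ge0}$. A non-dominated $\mathbf{x}$ lying on the boundary of $C^+$, which is the intended meaning of lying on the convex hull boundary in the Pareto sense, then admits a supporting $\mathbf{a}\ge0$, $\mathbf{a}\neq0$. Minimizing $\mathbf{a}^\top\mathbf{y}$ over $C^+$ is the same as minimizing over $\mathcal{F}_{s,g}$, so $P\in\mathcal{P}_{\text{WS}}$.

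To finish, I would note that the WS-optimal points of the first inclusion also lie on $\partial C^+$, because the hyperplane with normal $\mathbf{w}\ge0$ supports $C^+$. With this reading of ``boundary'', both inclusions use the same set, which gives $\mathcal{P}_{\text{WS}}=\mathcal{P}_{\text{SPO}}$.
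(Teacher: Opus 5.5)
Your route is the Boyd--Vandenberghe argument the paper only cites: strictly positive weights give non-dominated minimizers, and supporting hyperplanes of $C+\mathbb{R}^n_{\ge 0}$ have nonnegative normals. You are also right that the statement needs reinterpretation. Read literally it is false: $(1.5,1.5)$ in $\{(0,2),(2,0),(1.5,1.5)\}$ is non-dominated and unsupported, yet it is a vertex of $\mathrm{conv}(\mathcal{F}_{s,g})$.

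The gap is that your two repairs do not fit together. In the first inclusion you shrink $\mathcal{P}_{\text{WS}}$ to tie-break winners, i.e.\ minimizers for the strictly positive weight $\mathbf{w}+\rho\mathbf{1}$. In the second inclusion the supporting hyperplane theorem only yields some $\mathbf{a}\ge 0$, possibly with zero entries. Nothing guarantees that $P$ survives the tie-break for $\mathbf{a}$, so the concluding step ``so $P\in\mathcal{P}_{\text{WS}}$'' fails.

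Concretely, let $n=3$ and $\mathcal{F}_{s,g}=\{(0,0,2),(0,2,0),(0,\tfrac{3}{2},\tfrac{3}{2})\}$. The third point is non-dominated in $\mathcal{F}_{s,g}$ and lies on $\partial C^+$, since it minimizes $y_1$. For it to be a WS minimizer you need $\tfrac{3}{2}(w_2+w_3)\le 2w_3$ and $\tfrac{3}{2}(w_2+w_3)\le 2w_2$. Adding these forces $w_2=w_3=0$, so it is a WS minimizer only for $\mathbf{w}=(1,0,0)$. For that weight it loses the $\sum_i F_i$ tie-break, $3$ against $2$. Hence, under your conventions, $\mathcal{P}_{\text{SPO}}\not\subseteq\mathcal{P}_{\text{WS}}$. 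The underlying reason is that the point is dominated by $\tfrac{1}{2}\bigl((0,0,2)+(0,2,0)\bigr)=(0,1,1)$: it is non-dominated in $\mathcal{F}_{s,g}$ but not in $C^+$. For $n=2$ your two conventions happen to agree, which is why this only appears from three objectives on.

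You need one consistent pair of definitions. There are two options.

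\emph{Option (i).} Drop the tie-break and let $\mathcal{P}_{\text{WS}}$ contain every path that minimizes the WS for some $\mathbf{w}\in\mathcal{W}$. Your two hyperplane arguments then show exactly that these are the paths with $\mathbf{F}(P)\in\partial C^+$. The lemma becomes $\mathcal{P}_{\text{SPO}}=\mathcal{P}_{\text{WS}}\cap\{\text{non-dominated paths}\}$, with no tie-breaking needed.

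\emph{Option (ii).} Use only strictly positive weights, but define $\mathcal{P}_{\text{SPO}}$ by requiring that no point of $\mathrm{conv}(\mathcal{F}_{s,g})$ dominates $\mathbf{F}(P)$. Then $\mathcal{P}_{\text{WS}}\subseteq\mathcal{P}_{\text{SPO}}$ is immediate. The reverse inclusion $\mathcal{P}_{\text{SPO}}\subseteq\mathcal{P}_{\text{WS}}$ needs a genuinely polyhedral fact: every efficient point of a polyhedron minimizes some strictly positive weighted sum. This is Isermann's theorem for multi-objective LPs, provable by LP duality. The supporting hyperplane theorem alone only gives $\mathbf{a}\ge 0$.

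This mismatch is why Boyd--Vandenberghe state the two directions separately, with $\lambda\succ 0$ and $\lambda\succeq 0$, rather than as an equality. The paper's one-line citation passes over this subtlety.
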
 
The proof follows from the discussion provided in Sections 4.7.3 and 4.7.4 in \cite{boyd2004convex}. If the Pareto front is non-convex, there exist solutions that do not have a supporting hyperplane, and therefore cannot be found by the WS scalarization. 

\begin{proposition}[Proposition 1, \cite{wilde_scalarizing_2024}]\label{lem:lemma2}
Let \(\mathcal{P'} \subseteq \mathcal{P}_{s,g}\) be the set of all Pareto-optimal solution paths to \eqref{eq:moo_problem}. If the Pareto front is non-convex, then, \(\mathcal{P}_{\text{SPO}} = \mathcal{P}_{\text{WS}} \;\subset\; \mathcal{P}_{\text{WM}} \;=\; \mathcal{P}' \;\subseteq\; \mathcal{P}_{s,g}\).
\end{proposition}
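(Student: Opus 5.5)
The chain has four links: the equality $\mathcal{P}_{\text{SPO}}=\mathcal{P}_{\text{WS}}$, the strict inclusion $\mathcal{P}_{\text{WS}}\subset\mathcal{P}_{\text{WM}}$, the equality $\mathcal{P}_{\text{WM}}=\mathcal{P}'$, and the trivial inclusion $\mathcal{P}'\subseteq\mathcal{P}_{s,g}$. The first equality is Lemma~\ref{lem:lemma1}, so the work lies in the middle two.

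Throughout, the WM cost is understood with the tie-breaker $\rho\sum_i F_i(P)$ from Section~II, so that WM minimizers are Pareto-optimal and not merely weakly so. Since $\mathcal{P}_{s,g}$ consists of simple paths in a finite graph, $\mathcal{F}_{s,g}$ is finite. This lets us pick $\rho$ small enough once a weight is fixed.

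\textbf{Step 1: $\mathcal{P}_{\text{WM}}=\mathcal{P}'$.}

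For $\mathcal{P}_{\text{WM}}\subseteq\mathcal{P}'$, suppose $P$ minimizes the augmented WM cost for some $\mathbf{w}$ and some $P'$ dominates it. Then $\max_i w_iF_i(P')\le \max_i w_iF_i(P)$, because each $F_i$ weakly decreases. Also $\sum_iF_i(P')<\sum_iF_i(P)$, because one coordinate strictly decreases. Hence $P'$ has strictly smaller augmented cost, a contradiction.

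For $\mathcal{P}'\subseteq\mathcal{P}_{\text{WM}}$, take a Pareto-optimal $P^*$. First assume all $F_i(P^*)>0$, and set $w_i\propto 1/F_i(P^*)$, normalized to lie in $\mathcal{W}$. Then $w_iF_i(P^*)=c$ for every $i$. Any path $Q$ with $\max_i w_iF_i(Q)<c$ would strictly dominate $P^*$, so $P^*$ minimizes the plain WM cost. Among paths with $\max_i w_iF_i(Q)=c$ we have $F_i(Q)\le F_i(P^*)$ for all $i$. Pareto-optimality then forces $\mathbf{F}(Q)=\mathbf{F}(P^*)$, so the tie-breaker does not exclude $P^*$. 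Finiteness then gives a $\rho$ small enough that the tie-breaker never overturns a strict gap in the WM term.

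If some coordinate $F_j(P^*)=0$, set $w_j=0$ on those coordinates, or perturb them with a large weight. The same argument then goes through on the remaining coordinates. This is the standard Chebyshev argument from \cite{branke_multiobjective_2008}.

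\textbf{Step 2: $\mathcal{P}_{\text{WS}}\subset\mathcal{P}_{\text{WM}}$, with strict inclusion.}

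For the inclusion, take $P\in\mathcal{P}_{\text{WS}}=\mathcal{P}_{\text{SPO}}$. By Lemma~\ref{lem:lemma1} it is non-dominated, so it lies in $\mathcal{P}'$, and by Step 1 it lies in $\mathcal{P}_{\text{WM}}$.

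For strictness, use the hypothesis that the Pareto front is non-convex. I interpret this as saying that some Pareto-optimal cost vector does not lie on the boundary of the convex hull of $\mathcal{F}_{s,g}$, or at least is not supported by any hyperplane with normal in $\mathcal{W}$. A path $P$ attaining that cost vector is then in $\mathcal{P}'$ but not in $\mathcal{P}_{\text{SPO}}=\mathcal{P}_{\text{WS}}$. By Step 1, $P\in\mathcal{P}_{\text{WM}}\setminus\mathcal{P}_{\text{WS}}$.

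The main obstacle is making this step precise. The statement's notion of ``non-convex Pareto front'' must be pinned down so that it yields exactly a Pareto point that is not supported, and the tie/weak-optimality issues in Lemma~\ref{lem:lemma1} must be handled. I would simply adopt the defining property just described, since it is precisely what the discussion after Lemma~\ref{lem:lemma1} asserts. I would also take care that when several paths share a cost vector, membership is argued at the level of paths.
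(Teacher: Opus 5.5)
Your proposal is correct and follows essentially the paper's route. The paper simply cites Lemma~\ref{lem:lemma1}, the fact that the WS misses unsupported (non-convex-region) Pareto points, and Pareto-completeness of the tie-broken WM; you prove the Chebyshev equality $\mathcal{P}_{\text{WM}}=\mathcal{P}'$ explicitly, and your reading of ``non-convex front'' matches the paper's own remark after Lemma~\ref{lem:lemma1}. The one slip is in the degenerate case $F_j(P^*)=0$, where setting $w_j=0$ fails: if $\mathbf{F}(P^*)=(0,5)$ and $\mathbf{F}(Q)=(1,4)$ are the only cost vectors, then $\mathbf{w}=(0,1)$ selects $Q$. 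Keep only your large-weight option: use unnormalized weights $1/F_i(P^*)$ on the positive coordinates and $M$ on the zero ones, with $1/M$ below the smallest positive value any such $F_j$ attains on the finite set $\mathcal{F}_{s,g}$; this forces $F_j(Q)=0$ for every $Q$ with $\max_i w_iF_i(Q)\le 1$, so your domination argument applies.
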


The proof follows from an established property in the multi-objective optimization literature that the WS cannot find solutions that lie in the non-convex region of the Pareto front, while the WM is Pareto-complete \cite{branke_multiobjective_2008, wilde_scalarizing_2024}. In the case of a non-convex Pareto front, we provide a worst-case approximation bound for using the WS to approximate WM. 

\begin{proposition}[Approximation Bound]\label{prop:prop1} Consider solving the weighted max scalarization and weighted sum scalarization of \eqref{eq:moo_problem} with a fixed \(\mathbf{w} \in \mathcal{W}\), yielding \(P_{\text{WM}}\) and \(P_{\text{WS}}\) respectively. If there are \(n\) objectives, then 
\[
\text{WM}(P_{\text{WS}}) \leq n \cdot \text{WM}(P_{\text{WM}}).
\]
\end{proposition}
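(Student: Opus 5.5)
The plan is to chain three elementary inequalities. They compare the weighted maximum and the weighted sum of a single cost vector, and they use the optimality of each path for its own scalarization. Throughout, fix $\mathbf{w}\in\mathcal{W}$ and use only that all $w_i\ge 0$ and $F_i(P)\ge 0$. No convexity of the Pareto front and none of the earlier lemmas are needed.

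\emph{Step 1 (max is at most sum).} For any path $P$, every term $w_iF_i(P)$ is nonnegative. So the largest term is bounded by the total, giving
\[
\text{WM}(P)=\max_i w_iF_i(P)\le \sum_{i=1}^n w_iF_i(P)=\text{WS}(P).
\]
Applying this with $P=P_{\text{WS}}$ gives $\text{WM}(P_{\text{WS}})\le \text{WS}(P_{\text{WS}})$.

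\emph{Step 2 (optimality of $P_{\text{WS}}$).} Since $P_{\text{WS}}$ minimizes $\text{WS}$ over $\mathcal{P}_{s,g}$ and $P_{\text{WM}}\in\mathcal{P}_{s,g}$, we have $\text{WS}(P_{\text{WS}})\le \text{WS}(P_{\text{WM}})$.

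\emph{Step 3 (sum is at most $n$ times max).} For any path $P$, each of the $n$ summands is at most the maximum, so
\[
\text{WS}(P)=\sum_{i=1}^n w_iF_i(P)\le n\max_i w_iF_i(P)=n\,\text{WM}(P).
\]
Applying this with $P=P_{\text{WM}}$ and chaining Steps 1--3 gives
\[
\text{WM}(P_{\text{WS}})\le \text{WS}(P_{\text{WS}})\le \text{WS}(P_{\text{WM}})\le n\,\text{WM}(P_{\text{WM}}),
\]
which is the claim.

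There is no real obstacle. The only care needed is the tie-breaker term $\rho\sum_iF_i$ if it is included in the cost. In that case I would state the result for the pure scalarizations \eqref{eq:ws_cost} and \eqref{eq:wm_cost}, as in the proposition, so that Step 2 uses exact optimality of $P_{\text{WS}}$ with respect to $\text{WS}$ alone. If the argmin is not unique, any minimizer works.
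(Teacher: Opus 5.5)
Your proof is correct and is essentially the paper's argument: the same chain $\text{WM}(P_{\text{WS}})\le \text{WS}(P_{\text{WS}})\le \text{WS}(P_{\text{WM}})\le n\,\text{WM}(P_{\text{WM}})$, using nonnegativity of the weighted terms and the optimality of $P_{\text{WS}}$. Your remark that Step 2 needs exact WS-optimality, so the tie-breaker term should be excluded, is a sensible clarification that the paper leaves implicit.
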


\begin{proof}
The WM cost is the largest objective function among \(w_1F_{1}(P_{\text{WM}}),\,\dots,\,w_nF_n({P_{\text{WM}})}\), that is \(w_i F_i(P_{\text{WM}}) \leq \text{WM}(P_{\text{WM}})\). Adding each of these inequalities yields
\[
    \begin{aligned}
        w_1F_{1}(P_{\text{WM}}) \; +\dots + \; w_nF_n(P_{\text{WM}}) &\leq n \cdot \text{WM}(P_{\text{WM}}) \\
        \text{WS}(P_{\text{WM}}) &\leq n \cdot \text{WM}(P_{\text{WM}}).
    \end{aligned}
\]
By the optimality of the WS problem, it follows that
\[
  \text{WS}(P_{\text{WS}}) \leq \text{WS}(P_{\text{WM}}) \leq n \cdot \text{WM}(P_{\text{WM}}).
\]
Finally, observe that for any path \(P\in\mathcal{P}_{s,g}\) we have \(\text{WM}(P) \leq \text{WS}(P)\), since the weights are non-negative and the maximum of the weighted components cannot be greater than the sum. We can now construct our final bound $\text{WM}(P_{\text{WS}}) \leq n \cdot \text{WM}(P_{\text{WM}})$.
\end{proof}

Geometrically, the worst-case \(n\)-factor bound is approached when the entirety of the Pareto front is non-convex, and the WM solution lies in its centre. In this case, the WS solution is forced to the supported endpoints of the Pareto front and therefore by Lemma \ref{lem:lemma1}, the bound cannot be reduced even if one resolves the WS with a different weight vector \(\hat{\textbf{w}}\).

The \(n\)-factor bound provides a worst-case guarantee. However, an open question remains on how the weights for the WS problem can be selected to minimize the approximation error for a given WM solution.

\begin{problem}[Best WS Approximation (BWSA)]\label{def:problem3}
Given some weight vector \(\mathbf{w} \in \mathcal{W}\) used to obtain \(P_{\text{WM}}\), the objective is to find a proxy weight vector \(\hat{\mathbf{w}} \in \mathcal{W}\) that solves
\[
\begin{aligned}
    \min_{\hat{\mathbf{w}} \in \mathcal{W}} & \; \max_{i=1,\ldots,n} \{w_i F_i(P)\}  \\
\text{subject to}\quad & P = \underset{P \in \mathcal{P}_{s,g}}{\mathrm{argmin}} \sum_{i = 1}^{n} \hat{w_i} F_i(P). 
\end{aligned}
\]
Observe that when \(\hat{\mathbf{w}}\) varies, \(P\) varies across \(\mathcal{P}_{\text{WS}}\), which by Lemma \ref{lem:lemma1} is equivalent to \(\mathcal{P}_{\text{SPO}}\). Therefore, we can rewrite the BWSA as 
\[
    \min_{P \in \mathcal{P}_{\text{SPO}}} \; \max_{i=1,\ldots,n} \{w_i F_i(P)\}.
\]
\end{problem}

\begin{proposition}[Hardness of BWSA] The Best WS Approximation Problem is NP-hard.
\end{proposition}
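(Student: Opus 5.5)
We prove NP-hardness by a polynomial reduction from a known NP-hard problem. The natural source is the hardness of the discrete WM problem (Problem~\ref{def:problem2}) from \cite{wilde_scalarizing_2024}, or, more concretely, from a classical NP-hard problem that underlies it, such as \textsc{Partition} or the two-objective constrained shortest path problem. Our plan is to reduce from \textsc{Partition}. The key is to build an instance whose supported solutions, that is, the set \(\mathcal{P}_{\text{SPO}}\), encode all subset sums. Then minimizing \(\max_i w_iF_i\) over \(\mathcal{P}_{\text{SPO}}\) answers whether a balanced partition exists.

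\textbf{Construction.} Take a \textsc{Partition} instance \(a_1,\dots,a_m\) with total \(2B\), and use \(n=2\) objectives. Build the standard chain graph \(s=v_0,v_1,\dots,v_m=g\). Between \(v_{j-1}\) and \(v_j\) place two parallel routes, each subdivided by an intermediate vertex so that paths stay simple:
\begin{itemize}
\item one route with cost vector \((a_j,0)\);
\item one route with cost vector \((0,a_j)\).
\end{itemize}
Every \(s\)--\(g\) path then corresponds to a subset \(S\), with cost vector \((\sum_{j\in S}a_j,\,2B-\sum_{j\in S}a_j)\). All cost vectors lie on the line \(F_1+F_2=2B\). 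Hence every cost vector is non-dominated and lies on the boundary of the convex hull of \(\mathcal{F}_{s,g}\) (the hull is a segment). So \(\mathcal{P}_{\text{SPO}}=\mathcal{P}_{s,g}\). With \(\mathbf{w}=(1/2,1/2)\), the BWSA optimum is at most \(B/2\) if and only if some subset sums to exactly \(B\). This gives the reduction.

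\textbf{Main obstacle.} The main obstacle is the interpretation of ``boundary of the convex hull'' combined with non-dominance, together with the argmin in Problem~\ref{def:problem3}. On this degenerate instance every weight \(\hat{\mathbf{w}}\ne(1/2,1/2)\) selects an endpoint, while \(\hat{\mathbf{w}}=(1/2,1/2)\) ties all paths. If the argmin is read as a single arbitrary tie-broken path, the equivalence with \(\mathcal{P}_{\text{SPO}}\) is delicate. We therefore rely on the rewritten form \(\min_{P\in\mathcal{P}_{\text{SPO}}}\) that the paper adopts via Lemma~\ref{lem:lemma1}. If a stricter reading is needed (only vertices of the hull), we would instead perturb the construction. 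One option is to slightly bend the line into a strictly convex curve, for example by adding a small quadratic-free scaling. A more robust alternative avoids degeneracy altogether: reduce directly from the WM problem itself on instances whose Pareto front is convex. On such instances \(\mathcal{P}_{\text{SPO}}\) equals the full Pareto front, so BWSA coincides with WM, which is NP-hard by \cite{wilde_scalarizing_2024}. We would check that the hardness instances of \cite{wilde_scalarizing_2024} have this property, possibly after the collinear construction above.

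Finally, we note the reduction is polynomial, since the graph has \(O(m)\) vertices and edges. We also note that any BWSA solver yields a decision for \textsc{Partition}: compare its optimal value to \(B/2\). This establishes NP-hardness.
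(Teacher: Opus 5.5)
Your reduction is correct for the rewritten form $\min_{P\in\mathcal{P}_{\text{SPO}}}\max_i w_iF_i(P)$, which the paper adopts via Lemma~\ref{lem:lemma1} and uses in its own proof. It takes a different route from the paper.

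\textbf{The paper's route.} The paper reduces from the WM problem (Problem~\ref{def:problem2}), treated as a black box. It appends one $0/1$ edge-indicator objective per edge, each with zero evaluation weight. Every $s$--$g$ path then becomes a non-dominated extreme point of the hull, and BWSA on the new instance is literally WM on the old one.

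\textbf{What each route buys.} Your \textsc{Partition} reduction is self-contained and uses a fixed number of objectives ($2$, versus $n+|E|$). That is the strengthening the paper only gestures at in its remark. The paper's construction instead buys robustness to ties. There, each path $P$ is the \emph{unique} WS minimizer for the proxy weight spread over the indicators of edges not in $P$. So the bilevel argmin formulation and the rewritten one agree under any tie-breaking rule.

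\textbf{The cost of your route.} In your instance, every path other than the two endpoint paths is WS-optimal only at $\hat{\mathbf{w}}=(1/2,1/2)$, where all $2^m$ paths tie. The optimal proxy weight is therefore always $(1/2,1/2)$, whatever the \textsc{Partition} instance. All of the hardness sits in choosing among tied minimizers, not in choosing $\hat{\mathbf{w}}$. The argument also needs ``boundary'' to mean the boundary in $\mathbb{R}^2$, not the relative boundary of the segment. So you prove hardness only of the optimistic version. You do not prove hardness of ``pick $\hat{\mathbf{w}}$ and take the path the WS planner returns.''

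\textbf{Your proposed repair.} Bending the line cannot work in a two-objective chain. The cost set is the Minkowski sum of the gadgets' two-point sets $\{x_j,y_j\}$, so its hull is a zonogon with at most $2m$ vertices. Hence at most $2m$ paths can ever be extreme. More generally, for two objectives the supported extreme points are breakpoints of a parametric shortest-path problem. There are only $|V|^{O(\log|V|)}$ of these, so a tie-free two-objective reduction should not be expected. Extra dimensions, as in the paper's indicator construction, are what remove the dependence on ties.
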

\begin{proof} The weighted max problem stated in Problem \ref{def:problem2} is known to be NP-hard as shown by \cite{wilde_scalarizing_2024}. We provide a proof sketch that reduces Problem \ref{def:problem2} to Problem \ref{def:problem3}.

Consider an arbitrary instance of Problem \ref{def:problem2} \(I = (G, s, g, \mathbf{f}, \mathbf{w})\) with \(m = |E|\) edges. In polynomial time, we transform it to a new instance \(I' = (G, s, g, \mathbf{f'}, \mathbf{w'})\) by augmenting the costs on each edge and the weight vector. We increase the number of objectives from \(n\) to \(n + m\) so that each edge \(e_j \in E\) now has the cost vector
\[
    \mathbf{f}'(e_j)
    = (
    \underbrace{f_1(e_j),\,\ldots,\,f_n(e_j)}_{\text{original }n},
    \underbrace{0,\,\ldots,\,0,\,1,\,0,\,\ldots,\,0}_{\text{\(m\) added components}}),
\]
where the added `1' is at position \(n + j\). The new weight vector used for evaluation is now \(\mathbf{w}' = (w_1, \dots, w_n, 0, \dots, 0)\). In the newly constructed instance, we denote the set of all paths \(\mathcal{P}_{s,g}'\) and the set of all path cost vectors \(\mathcal{F}_{s,g}'\). By adding a new edge indicator cost for each edge in the graph, we ensure two properties. First, this ensures that the cost vector of each path is now a vertex on \(conv(\mathcal{F}_{s,g}')\) since it cannot be written as a convex combination of any other cost vector in \(\mathcal{F}_{s,g}'\). Second, it ensures the cost vector of each path is non-dominated since for any two distinct paths \(Q, P \in \mathcal{P}_{s,g}'\), \(Q \neq P\),  there exists an edge \(e_k\) in \(P\) that does not exist in \(Q\), such that \(f'(P)_{n + k} = 1 > f'(Q)_{n + k} = 0\), and vice versa, therefore neither path is dominated. Lemma \ref{lem:lemma1} shows \(\mathcal{P'}_{WS} = \mathcal{P}_{SPO}'\) and the two properties derived above establish \(\mathcal{P'}_{SPO} = \mathcal{P}_{s,g}'\). Therefore, the BWSA problem on instance \(I'\) requires solving
\begin{equation}
    \min_{P \in \mathcal{P'}_{s,g}} \; \max_{i=1,\ldots,n+m} \{w_i' F_i'(P)\}.
    \label{eq:bwsa_opt}
\end{equation}

We now show that (\ref{eq:bwsa_opt}) is equivalent to Problem \ref{def:problem2}. Since \(\mathbf{w}' = (w_1, \dots, w_n, 0, \dots, 0)\), we can rewrite (\ref{eq:bwsa_opt}) as
\[
    \min_{P \in \mathcal{P}_{s,g}'} \! \ \max ( \!
    \max_{i \in \{1,\ldots,n\}} \! \{ w_i F_i(P) \}, \!\!
    \max_{i \in \{n+1,\ldots,n+m\}} \! \{ 0 \cdot F'_i(P) \}),
\]
By construction, the vertex and edge sets of \(I'\) remain unchanged from \(I\), therefore \(\mathcal{P}_{s,g}'\) = \(\mathcal{P}_{s,g}\). Substituting the domain and eliminating the zero-weighted terms yields:
\begin{equation}
    \min_{P \in \mathcal{P}_{s,g}} \ \max_{i \in \{1,\ldots,n\}} \{ w_i F_i(P) \}. \label{eq:wm_opt}
\end{equation}
Observe that (\ref{eq:wm_opt}) is identical to the cost function of Problem \ref{def:problem2} over the same domain. Therefore, the optimal solution obtained by solving the BWSA problem on instance \(I'\) is equal to the optimal solution for the WM problem on instance \(I\). Since the transformation of instance \(I\) to \(I'\) is polynomial in the size of the input graph and the WM problem is NP-hard, the BWSA problem must be NP-hard.
\end{proof}

\begin{remark}[Hardness for a fixed number of objectives]
    This reduction depends on the size of the input graph. To strengthen this result and show NP-hardness for a fixed number of objectives, the edge perturbation can be replaced by a constant-dimension perturbation.  
\end{remark}

While these results highlight the fundamental limits of the WS scalarization, we explore recovering an unsupported \(P_{\text{WM}}\) solution using the WS on a \emph{sequence of subproblems} instead of optimizing over the entire set \(\mathcal{P}_{s,g}\). This would enable us to retain the Pareto-complete property of the WM scalarization with the efficiency of the WS scalarization.

\subsection{On the Convexity of Subproblem Pareto Fronts} \label{subsec:convex-subproblem}

\begin{figure}[t]
    \centering
    \includegraphics[height=4.25cm]{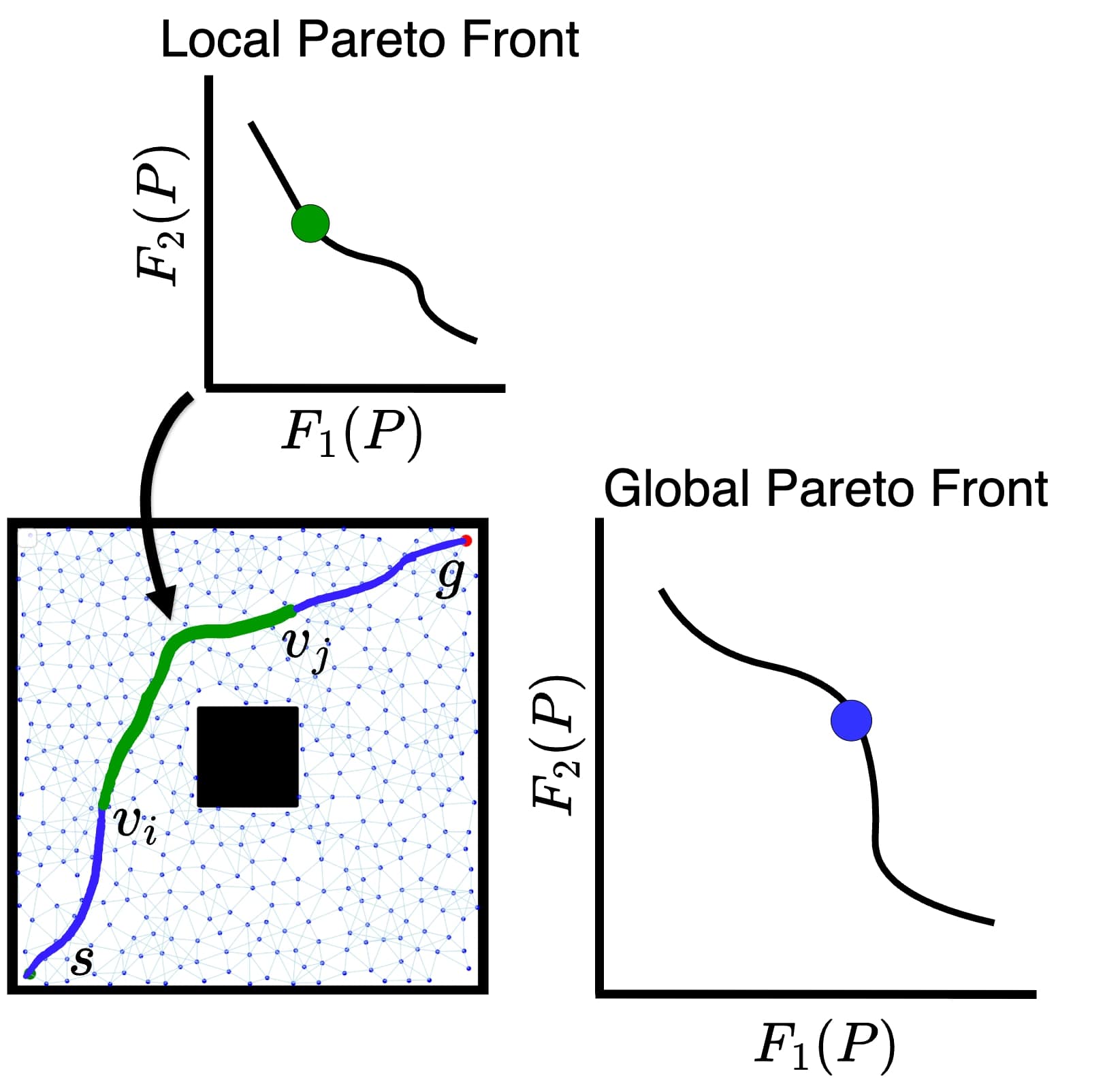}
    \caption{\small An example of the changing geometry of the Pareto front depending on the subproblem considered. The global Pareto front corresponds to optimizing over \(\mathcal{P}_{s, g}\). The local Pareto front corresponds to the problem of optimizing over \(\mathcal{P}_{v_i, v_j}\).}
    \label{fig:local_vs_global_pf}
    \vspace{-2mm}
\end{figure}

Consider a WM solution \(P_{\text{WM}}\), composed of vertices \(\{v_1, \dots,v_n\}\). Pick two intermediate vertices (see Figure \ref{fig:local_vs_global_pf}) \(v_i\) and \(v_j\) such that \(i < j\) and let \(P_{v_i,v_j}\) denote the segment of \(P_{\text{WM}}\) between \(v_i\) and \(v_j\). Now consider solving a subproblem of \eqref{eq:moo_problem} from \(v_i\) to \(v_j\) with the feasible set denoted as \(\mathcal{P}_{v_i,v_j}\). If the cost vector of \(P_{v_i,v_j} \in \mathcal{P}_{v_i,v_j}\) is a point on the convex region of the subproblem's Pareto front, then by Lemma \ref{lem:lemma1} there exists some weight vector such that \(P_{i,j}\) is a solution to the WS problem in \eqref{eq:ws_cost} optimized over \(\mathcal{P}_{v_i,v_j}\). This property is illustrated in Figure \ref{fig:local_vs_global_pf} in which we are minimizing path length and obstacle closeness. Although the cost vector of the global path (blue) lies on the non-convex region of the Pareto front, when considering the subproblem of optimizing from \(v_i\) to \(v_j\), the cost vector of the optimal subpath (green) lies on the convex region of the subproblem's Pareto front. While the existence of this property cannot be guaranteed for an arbitrary graph, we demonstrate in Section \ref{sec:results} that its occurrence in practice is common. 

The core challenge is finding the correct series of break points \(v_i\) and \(v_j\) that define each subproblem and the correct weight vector to apply to each subproblem, which we have shown is also an NP-hard problem. This challenge is inherently combinatorial, which motivates the use of certain optimization frameworks, such as LNS, which have been well-established for their ability to solve such problems.

\section{Large Neighbourhood Search for Weighted Maximization}
In this section, we describe our approach to solving Problem \ref{def:problem2} by using LNS to determine the sequence of WS subproblems and the corresponding weight vectors to use.  

\subsection{WM-LNS Solver Framework}
Given a graph \(G\), start vertex \(s\), end vertex \(g\), and a weight vector \(\textbf{w} \in \mathcal{W}\), the search begins by computing an initial path \(P\). Each iteration then repeats the following steps. First, we uniformly randomly sample a value \(k\) between 1 and \(k_{\max}\) as the length of a continuous segment to remove from our path \(P\) (line 4 in Algorithm \ref{alg:wm‐lns}). Then we probabilistically select some destroy heuristic \(d\) based on its historical success (line 5) and use it to remove a continuous segment of \(k\) vertices from \(P\) leaving an infeasible solution \(P_{\text{partial}}\) with two disjoint subpaths; one from \(s\) to some \(v_i\), and one from \(v_{i+k}\) to \(g\). The disjoint paths are reconnected by solving the WS subproblem between \(v_i\) and \(v_{i+k}\), but using a new weight vector \(\textbf{w}'\) to encourage the exploration of new solutions. We splice the repaired segment back into \(P_{\text{partial}}\), forming a new complete path, \(P_{\text{new}}\). We keep track of the best solution discovered with respect to the input weights \(\textbf{w}\) (line 9), and the new solution \(P_{\text{new}}\) is accepted probabilistically using simulated annealing (line 12) \cite{ropke_adaptive_2006}. The search is terminated after a predefined maximum number of iterations or a fixed number of non‐improving iterations. 

\begin{algorithm}
\caption{WM‐LNS}\label{alg:wm‐lns}
\begin{algorithmic}[1]
\renewcommand{\algorithmicrequire}{\textbf{Input:}}
\renewcommand{\algorithmicensure}{\textbf{Output:}}
\Require Graph $G$, start $s$, goal $g$, weight vector $\mathbf{w}$
\Ensure Path $P_{\text{best}}$ from $s$ to $g$ optimized for $\mathbf{w}$
  \State $P \gets \textsc{initialSolution}(G,s,g,\mathbf{w})$ \Comment{Sec. IV-B}
  \State $P_{\text{best}} \gets P$
  \Repeat
    \State $k \gets \text{Uniformly randomly from }\{1, \dotsc ,k_{\max}\}$
    \State $d = \textsc{selectHeuristic()}$ \Comment{Sec. IV-D, IV-F}
    \State $P_\text{partial} \gets \textsc{destroy}(P,d,k)$ \Comment{Sec. IV-C}
    \State $P_{\text{new}} \gets \textsc{repair}(G, P_{\text{partial}},\mathbf{w'})$ \Comment{Sec. IV-D}
    \If{$\text{WM}(P_{\text{new}}) < \text{WM}(P_{\text{best}})$}
      \State $P_{\text{best}} \gets P_{\text{new}}$
    \EndIf
    \If{\textsc{accept}($P_{\text{new}}$, $P$)} \Comment{Sec. IV-E}
      \State $P \gets P_{\text{new}}$
    \EndIf
    \State \textsc{updateProbabilities()} \Comment{Sec. IV-F}
  \Until{stopping criteria met}
  \State \Return $P_{\text{best}}$
\end{algorithmic}
\end{algorithm}

\subsection{Initial Solution Generation}
\label{sec:initial-sol}
To generate an initial solution, we modify the polynomial-time, suboptimal WM solver variant presented in \cite{wilde_scalarizing_2024} and propose our own variant. The original WM solver described in \cite{wilde_scalarizing_2024} operates by keeping an open list of non-dominated predecessor paths to every node in the graph to discover a Pareto-optimal path. To prevent the open list from growing exponentially, the authors of \cite{wilde_scalarizing_2024} introduce a budget \(b\) on the open list for every node. Instead of placing a hard limit on the number of predecessor paths, we adopt a beam search strategy where the \(b\) best (lowest WM cost) predecessor paths are maintained. Using this proposed modification, we generate an initial solution with a small beam width, which enables the rapid generation of an initial path.

\subsection{Destroy Procedure}
We leverage a set of five destroy heuristics and select one in each iteration to identify which continuous segment \(S \in P\) of length \(k\) to remove. These heuristics aim to diversify the search and target parts of our solution for re-optimization.

\subsubsection{Worst Removal} We remove the continuous segment \(S\) that has the highest WM cost in our path. 

\subsubsection{Best Removal} We remove the continuous segment \(S\) that has the lowest WM cost in our path. While counterintuitive, the aim of this heuristic is to escape local minima by exploring reconnections of locally optimal segments.

\subsubsection{Unbalanced Objective Removal} We remove the continuous segment \(S\) that has the highest mean absolute deviation between objectives.

\subsubsection{Balanced Objective Removal}
We remove the continuous segment \(S\) that has the lowest mean absolute deviation between objectives.

\subsubsection{Random Removal} We remove a random segment \(S\). 

\subsection{Repair Procedure}
Given an infeasible solution \(P_{\text{partial}}\) that contains two disjoint subpaths from \(s\) to some \(v_i\), and from \(v_{i+k}\) to \(g\), we reconnect them by scalarizing the edge costs as a WS and applying \(\text{A}^*\). In Section \ref{subsec:convex-subproblem}, we introduced the property that a subpath's cost vector can lie on a convex region of the subproblem's Pareto front and is thus discoverable by the WS. Given that choosing an optimal weight vector for repair is NP-hard (Problem \ref{def:problem3} in Section \ref{subsec:limitations}), we choose a new weight vector \(\mathbf{w'} \in \mathcal{W}\) through random sampling on a logarithmic scale and obtain some new segment \(S'\) by optimizing the WS problem in \eqref{eq:ws_cost} over the set \(\mathcal{P}_{v_i, v_{i+k}}\). We splice this new segment \(S'\) into \(P_{\text{partial}}\), forming \(P_{\text{new}}\).

\subsubsection{Guided Weight Selection}
Selecting cost-reducing repair weights through random sampling is challenging in higher dimensions; empirically, we found this approach scales poorly beyond a two-dimensional weight space. Therefore, when the dimension of the weight space is greater than two, we turn to derivative-free optimization and apply a modified Generalized Pattern Search (GPS) \cite{audet_derivative-free_2017} in line 7 of Algorithm \ref{alg:wm‐lns} to choose a weight vector for repair.

We begin by randomly sampling \(\mathcal{W}\) on a logarithmic scale to get an initial starting weight, denoted as \(w_{\text{current}}\). In each iteration of our GPS algorithm, we repeat the following steps. First, we generate a set of candidate weights by creating a search mesh, defined by a positive spanning set \(\mathcal{D}\), around the incumbent point. For each direction \(d \in \mathcal{D}\) we randomly sample a step size \(\delta \sim \text{Uniform}(\delta_{\min}, \delta_{\max})\) and form a candidate weight \(w_{\text{candidate}} = w_{\text{current}} + \delta d\) which we project onto the simplex if it falls outside \(\mathcal{W}\) using the algorithm in \cite{condat_fast_2016}. Second, we evaluate each \(w_{\text{candidate}}\) by using it as a repair weight in our WS subroutine and computing \(\text{WM}(P_{\text{candidate}})\). If we find any improvement, we set \(P_{\text{current}}= P_{\text{candidate}}\) and expand our mesh (increase \(\delta_{\min}\) and \(\delta_{\max}\)). If we find no improvement, we shrink the mesh (decrease \(\delta_{\min}\) and \(\delta_{\max}\)).

\subsection{Acceptance Criteria}
To prevent convergence to local minima, we use the simulated annealing criterion from \cite{ropke_adaptive_2006} to determine acceptance of \(P_{new}\). In each iteration, we compute a normalized change in solution cost under the input weights \(\textbf{w}\) using (\ref{eq:wm_cost}):
\[
    \Delta = \frac{\text{WM}(P_{\text{new}}) - \text{WM}(P)}{\text{WM}(P) + \epsilon},
\]
where \(\epsilon > 0\) is a sufficiently small constant. A cost-reducing solution with \(\Delta < 0\) is always accepted, while repeating solutions with \(\Delta = 0\) are skipped. For solutions with \(\Delta > 0\), we accept it with probability \(e^{-\frac{\Delta}{T}}\) where \(T\) is the current temperature. We initialize our temperature to \(T_0\) and decrease it by \(T = c \cdot T\) in every iteration, where \(0 < c < 1\) is the cooling rate. Following \cite{ropke_adaptive_2006}, we initialize \(T_0\) such that a \(p\)\% deterioration is accepted with probability of 50\%. Furthermore, we reheat to \(0.5\cdot T_0\) after a specified number of non-improving iterations. 

\subsection{Adaptive Heuristic Selection}
To adapt toward using the destroy heuristics best suited for the current problem instance, we maintain a set of scores for each heuristic \(s_d\) \cite{ropke_adaptive_2006}. At each iteration, we choose heuristic \(d\) using roulette-wheel sampling with a probability proportional to its performance score. After applying \(d\), we assign it a reward \(\sigma\) (Refer to \cite{ropke_adaptive_2006} and Table \ref{tab:lns_params} for details). Periodically, every \(I\) iterations we compute each heuristic \(d\)'s average reward \(r_{avg} = \frac{\sum reward}{\text{num. uses}}\) and update its score:
\[
    s_d = (1 - \gamma)s_d + \gamma \; r_{\text{avg}},
\]
where \(0 \leq\gamma\ \leq1\) is a specified reaction factor.

\section{Numerical Results}
\label{sec:results}
We conduct a comprehensive series of simulations to assess the performance of WM-LNS. We evaluate WM-LNS on i) the recovery of diverse solutions, ii) computation time, iii) performance across various planning tasks, and iv) its broad applicability by considering a navigation task on a 7-DOF manipulator. We compare WM-LNS against:

\begin{itemize}
    \item \textbf{WM}: We implement the exact version of the WM solver proposed in \cite{wilde_scalarizing_2024} that includes a cost-to-go heuristic.
    \item \textbf{WM-poly}: We implement the polynomial time variant of the WM solver proposed in \cite{wilde_scalarizing_2024}, which places a budget \(b\) on the number of paths leading to every vertex.
    \item \textbf{WM-beam}: We implement our proposed beam search variant of WM-poly from Section \ref{sec:initial-sol}. We allow for a significantly larger budget than was used for the initial solution of WM-LNS.
    \item \textbf{WS}: We replace each edge's cost vector with a scalarized weighted sum cost and run a standard \(\text{A}^*\) search.
\end{itemize}

In each experiment, we tune the budgets of WM-poly and WM-beam to achieve a runtime comparable to WM-LNS. The hyperparameters of WM-LNS are shown in Appendix \ref{sec:hyper-param}.

\subsection{Analysis of Solution Diversity}

\begin{figure}[t]
\centering
\begin{minipage}[t]{0.48\textwidth}
  \centering
  \begin{subfigure}[t]{0.45\textwidth}
    \includegraphics[height=3.0cm]{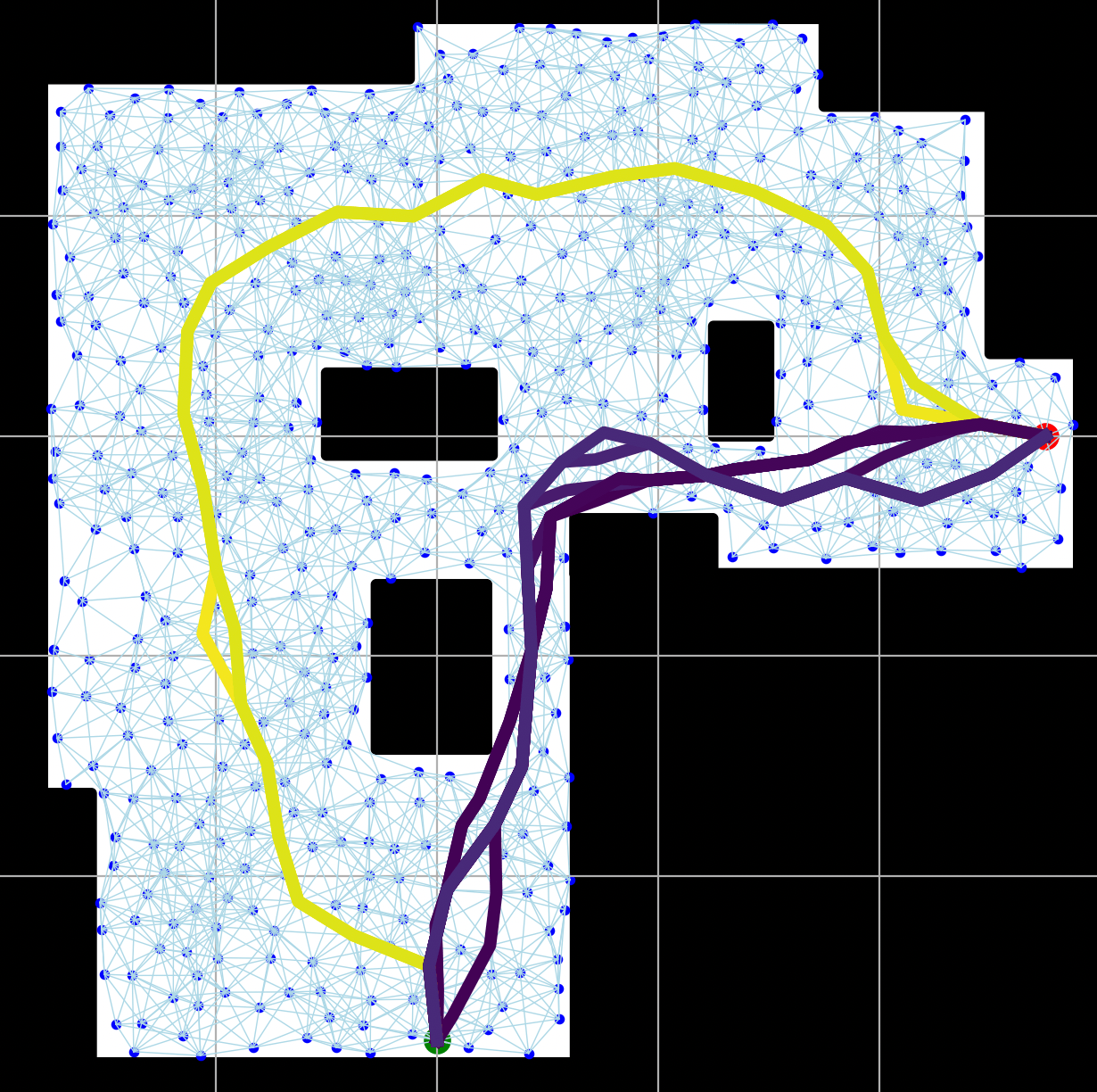}
  \end{subfigure}
  \hfill
  \begin{subfigure}[t]{0.53\textwidth}
    \includegraphics[height=3.1cm]{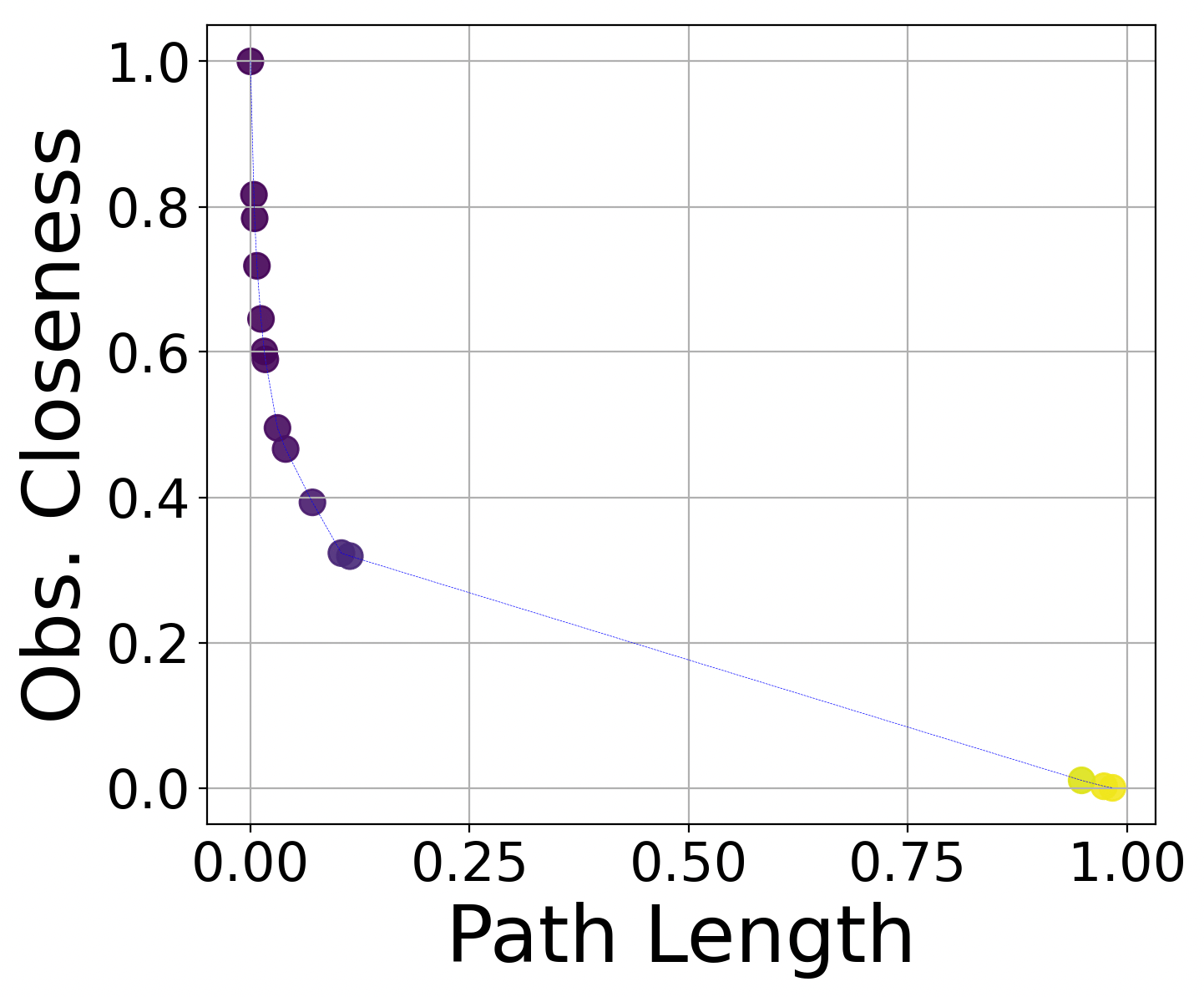}
  \end{subfigure}

  \vspace{0.25em}

  \begin{subfigure}[t]{0.45\textwidth}
    \includegraphics[height=3.0cm]{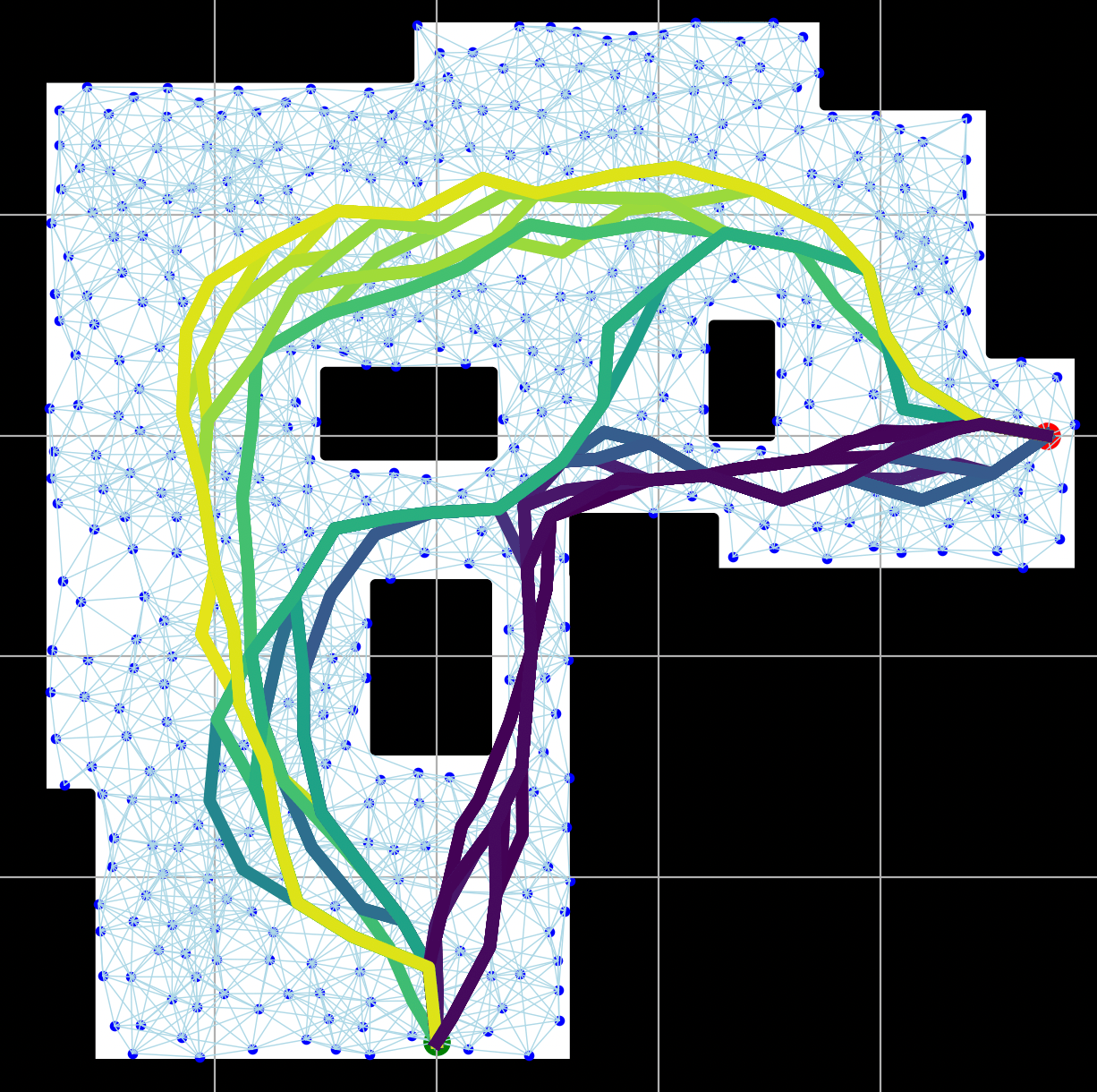}
  \end{subfigure}
  \hfill
  \begin{subfigure}[t]{0.53\textwidth}
    \includegraphics[height=3.1cm]{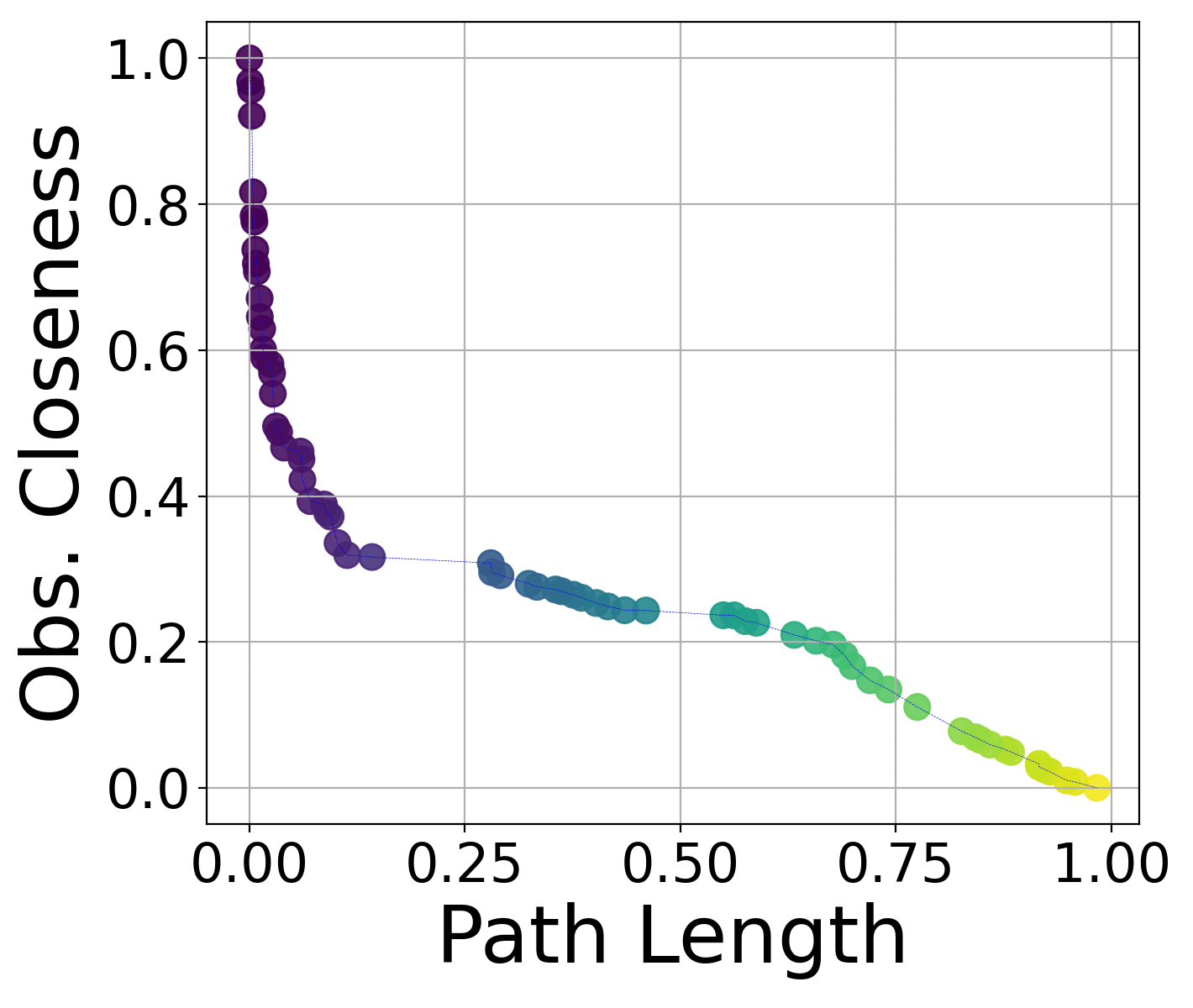}
  \end{subfigure}

  \vspace{0.25em}

  \begin{subfigure}[t]{0.45\textwidth}
    \includegraphics[height=3.0cm]{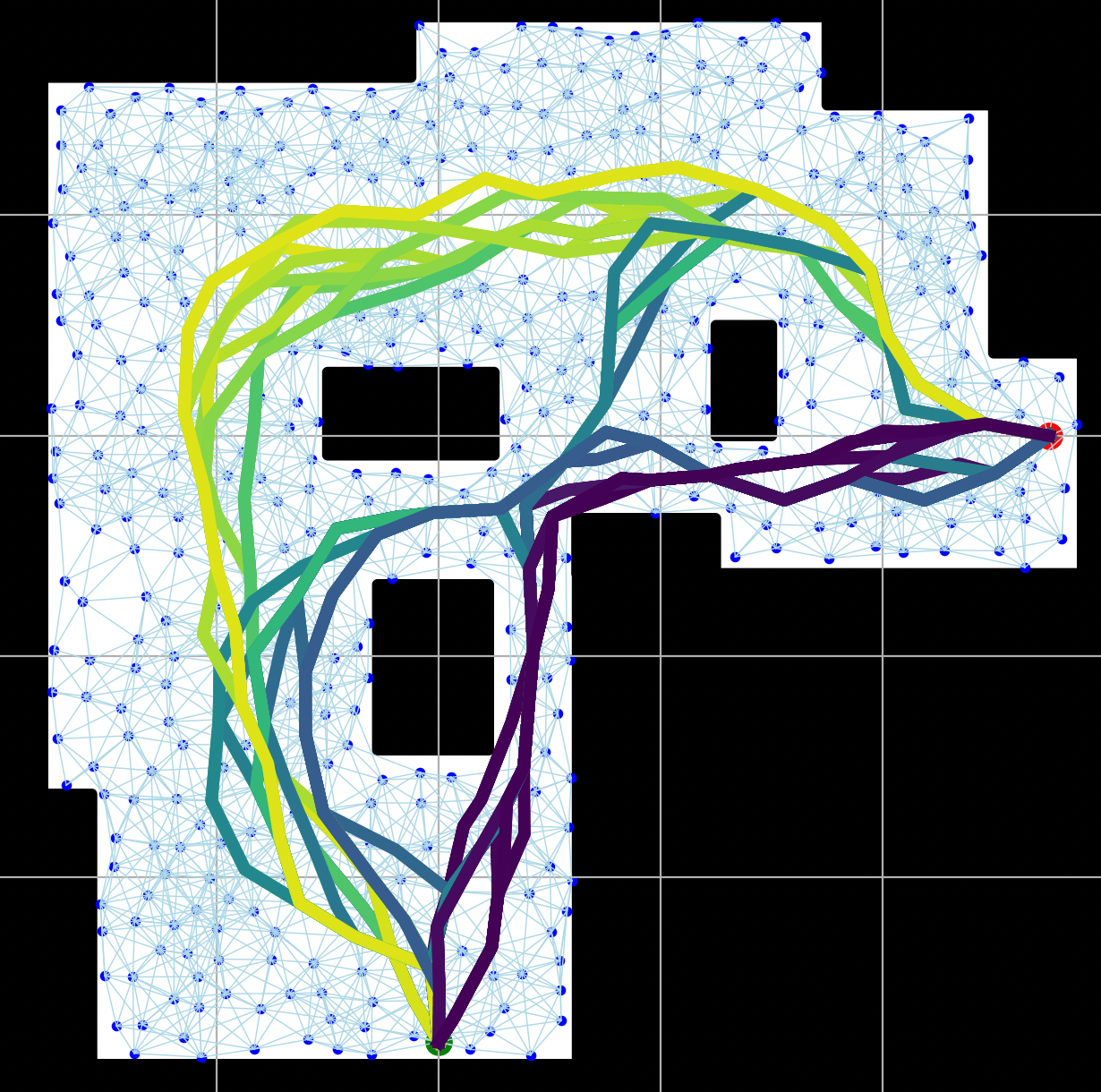}
  \end{subfigure}
  \hfill
  \begin{subfigure}[t]{0.53\textwidth}
    \includegraphics[height=3.1cm]{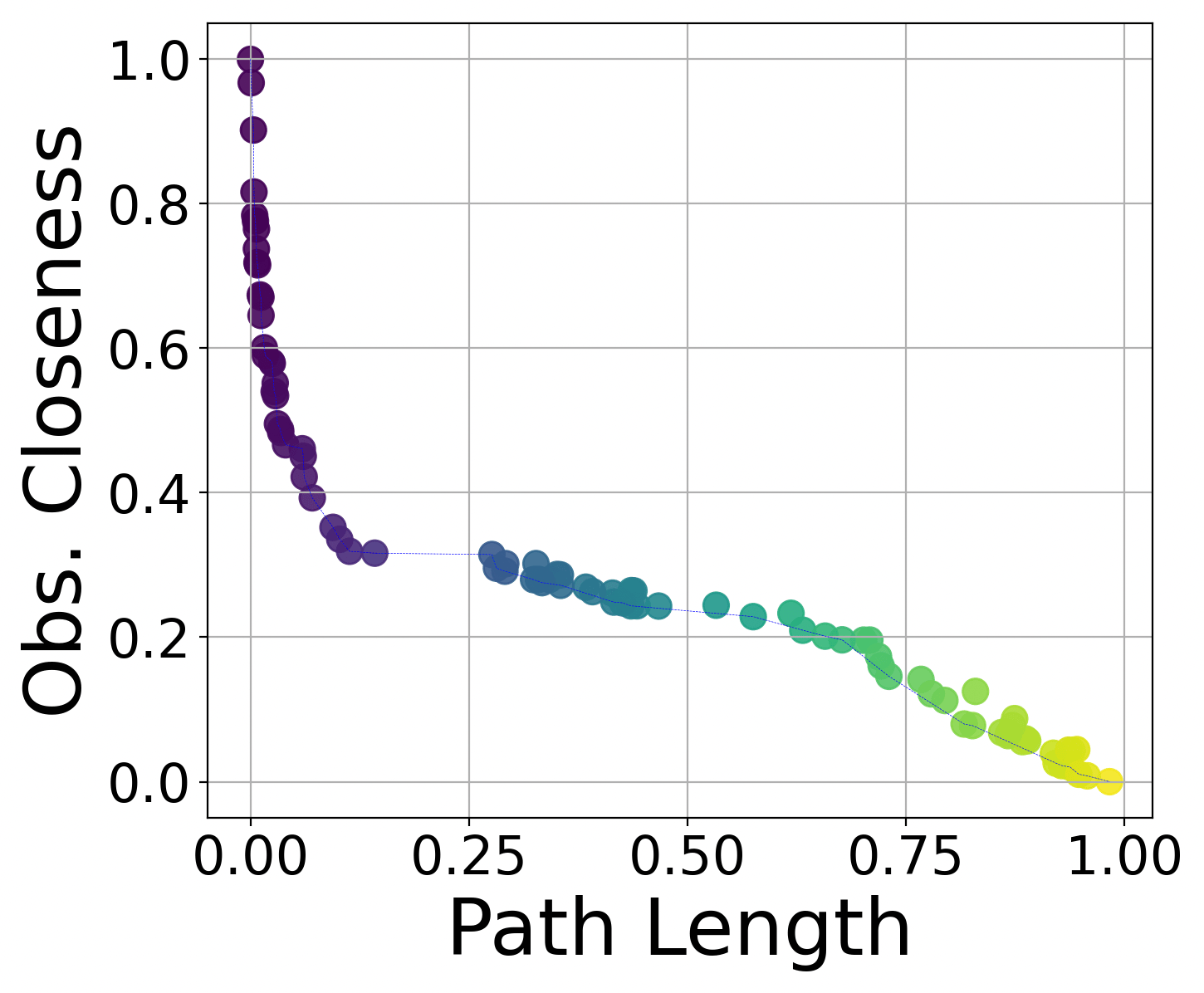}
  \end{subfigure}
\end{minipage}
\hfill
\caption{Solution sets obtained when optimizing two objectives: path length and obstacle closeness. The upper row shows the solution sets found by the WS, the middle row shows the solutions found by the WM, and the bottom row shows the solutions found by the proposed WM-LNS solver. The left column shows the paths found, and the right column shows the corresponding objective values. Note that the WM-LNS plots include suboptimal solutions for illustration; however, Table \ref{tab:solution_set_summary} reports only Pareto-optimal solutions.}
\label{fig:solution_sets_results}
\vspace{-2mm}
\end{figure}

A user may specify \textit{any} weight vector; therefore, we evaluate the ability of WM-LNS to recover diverse solutions across the entire trade-off space. We consider a probabilistic roadmap (PRM) with 500 nodes generated in a cluttered indoor environment, and minimize up to three objectives: path length, obstacle closeness, and risk. Risk is modelled by placing low, medium, and high-risk zones on the base map and adding the corresponding costs to the nodes in each zone (similar to Figure \ref{fig:test_instances}). We conduct two experiments: one for the two-objective case (path length and obstacle closeness) and one for the three-objective case (path length, obstacle closeness, and risk). In each experiment, we conduct 2000 trials, in which we run each solver with randomly sampled weights, record the solution sets obtained, and normalize their objective values. To quantify diversity, we report two metrics. \textit{Coverage} represents the hypervolume of the objective space dominated by a solution set; we sample over the set \([0,1]^n\) to estimate coverage \cite{wilde_scalarizing_2024, 797969}. \textit{Number of Solutions} represents the number of unique Pareto-optimal solutions found as sampling \(n\) different weights does not yield \(n\) different solutions \cite{wilde_scalarizing_2024}.  Results are shown in 
Table \ref{tab:solution_set_summary}.

Using WM-LNS, we can find a diverse solution set with \(13.2\%\) more coverage and \(246.7\%\) more solutions than WS in the two-objective case and \(2.6\%\) more coverage and \(43.2\%\) more solutions in the three-objective case. Compared to the suboptimal WM solvers, most notably WM-beam, we observe that WM-LNS yields \(20-30 \%\) fewer solutions; however, it achieves the same coverage, indicating it finds the same types or homotopy classes of different path solutions. While WM-poly and WM-beam can be used to return more solutions, in Section \ref{sec:runtime-quality}, we demonstrate that their solution quality degrades in larger search spaces. 

For the two-objective case, we plot the resulting solution sets in Figure~\ref{fig:solution_sets_results}. Observe that when using WM-LNS, we find a more diverse set of paths, particularly those with more balanced trade-offs that split the obstacles, a homotopy class of paths that cannot be found by the WS. 

\begin{table}[t]
\centering
\caption{Solution diversity across two and three objectives.}
\label{tab:solution_set_summary}
\setlength{\tabcolsep}{5pt}    
\begin{tabular}{llccc}
\toprule
\textbf{Instance} & \textbf{Method} & \textbf{Coverage $\uparrow$}  & \textbf{\# Solutions $\uparrow$} \\
\midrule
\multirow{3}{*}{Two Obj.} 
& WS & 0.68 & 15 \\
& WM & 0.77 & 69 \\
& WM-poly & 0.76 & 58 \\
& WM-beam (Ours) & 0.77 & 65 \\
& WM-LNS (Ours) & 0.77 & 52 \\
\midrule
\multirow{3}{*}{Three Obj.} 
& WS & 0.77 & 37 \\
& WM & 0.79 & 74 \\
& WM-poly & 0.78 & 56 \\
& WM-beam (Ours) & 0.79 & 73 \\
& WM-LNS (Ours) & 0.79 & 53 \\
\bottomrule
\end{tabular}
\vspace{-1.5mm}
\end{table}

\subsection{Analysis of Runtime \& Solution Quality}
\label{sec:runtime-quality}
The WM solver quickly becomes intractable in large search spaces with many non-dominated paths to explore. As such, our evaluation focuses on the performance in these types of situations. We construct two indoor environments and, for each environment, we build PRMs of increasing size (small, medium, large). For each PRM scale, we perform 50 trials. In each trial, we create an instance by generating a PRM of the specified scale and randomly choosing start and goal nodes. Each instance is solved using weights chosen such that the weighted objective values are approximately equal, since emphasizing a particular objective generally reduces the size of the search space.

\begin{figure}[t]
  \centering

  \begin{subfigure}[t]{\linewidth}
    \centering
    \includegraphics[height=2.5cm,keepaspectratio]{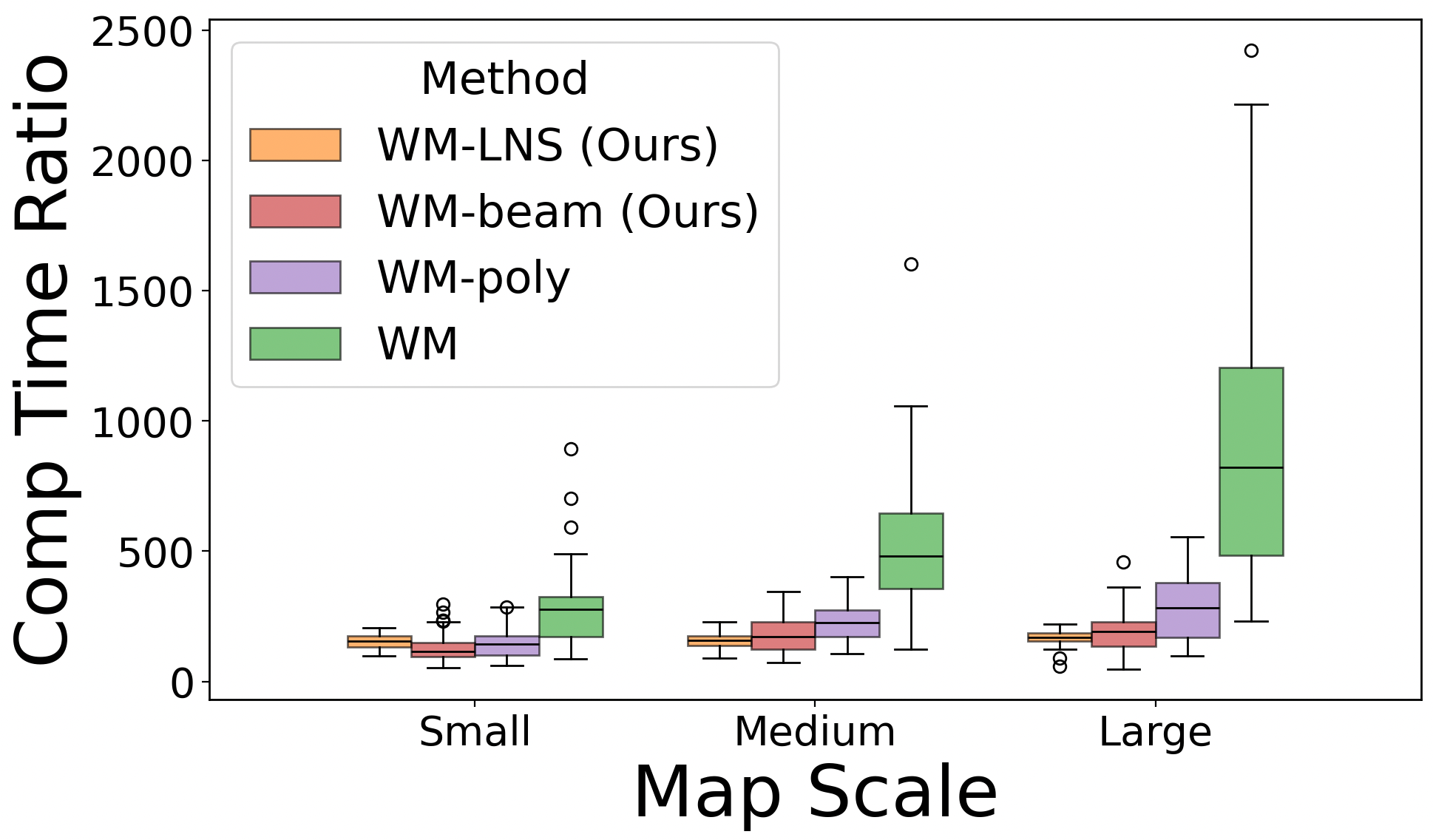}%
    \hfill
    \includegraphics[height=2.5cm,keepaspectratio]{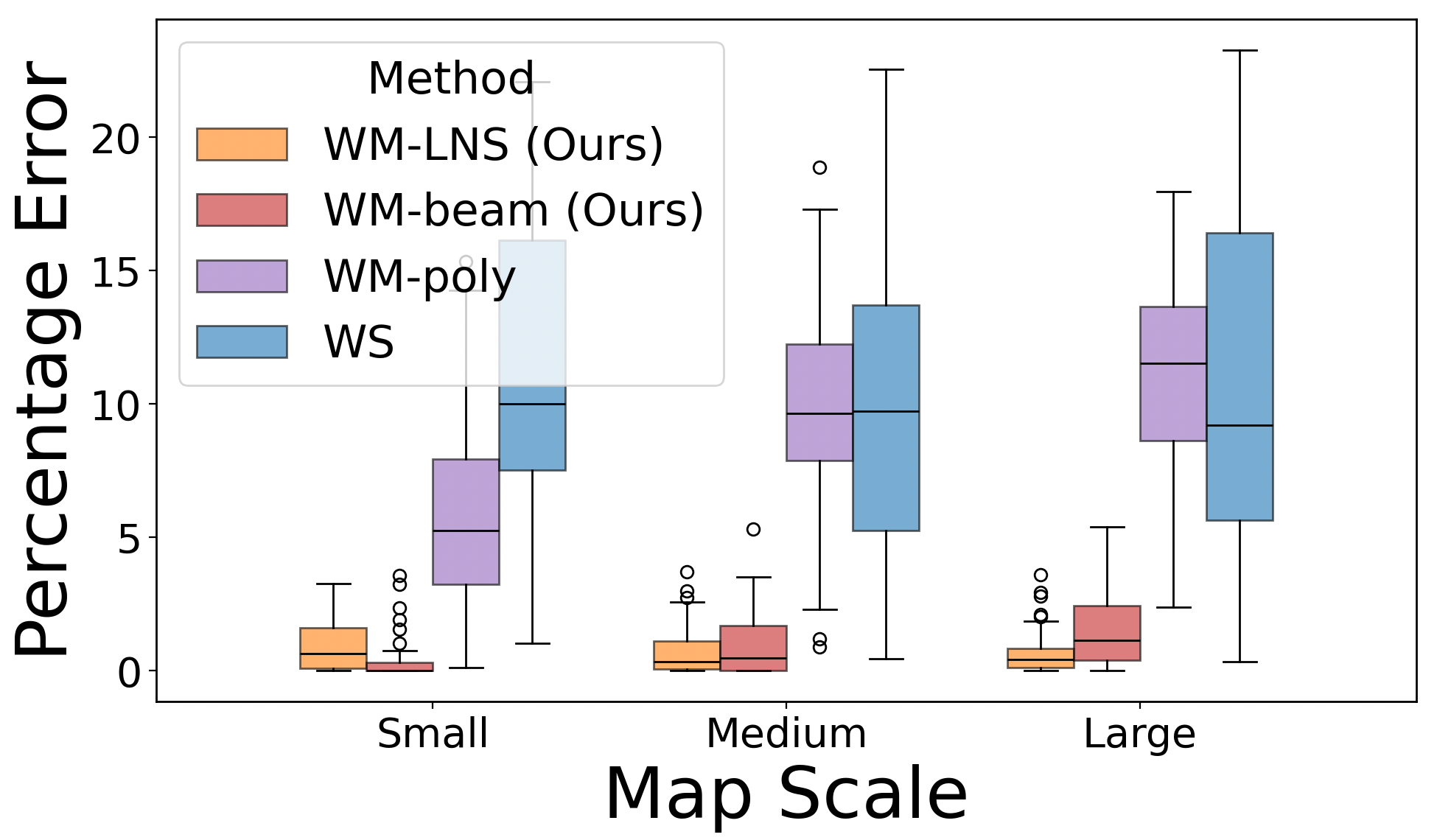}
    \caption{Simulation Results for Instance 1 (Maze).}
    \label{fig:instance-1}
  \end{subfigure}

  \vspace{0.5em}

  \begin{subfigure}[t]{\linewidth}
    \centering
    \includegraphics[height=2.5cm,keepaspectratio]{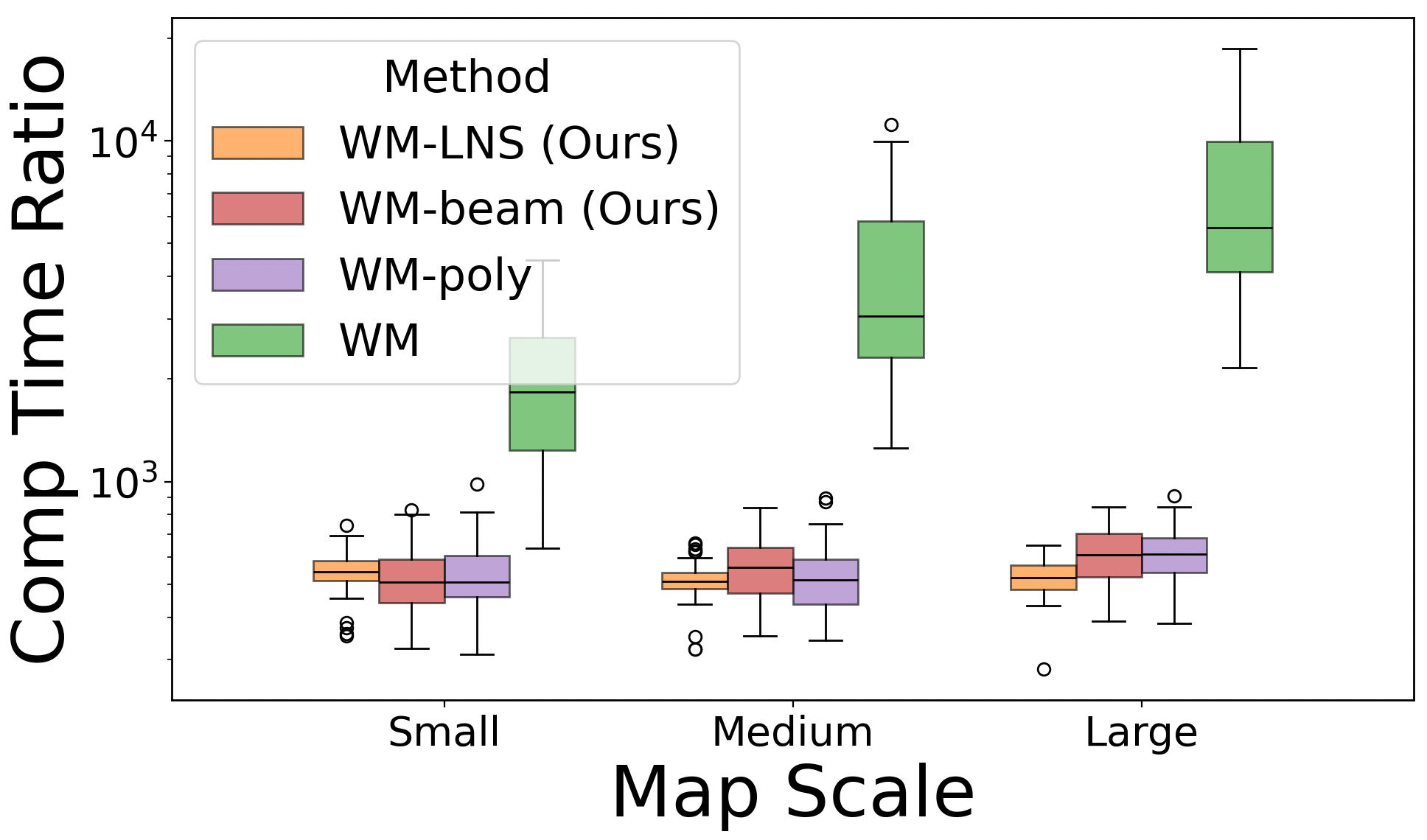}%
    \hfill
    \includegraphics[height=2.5cm,keepaspectratio]{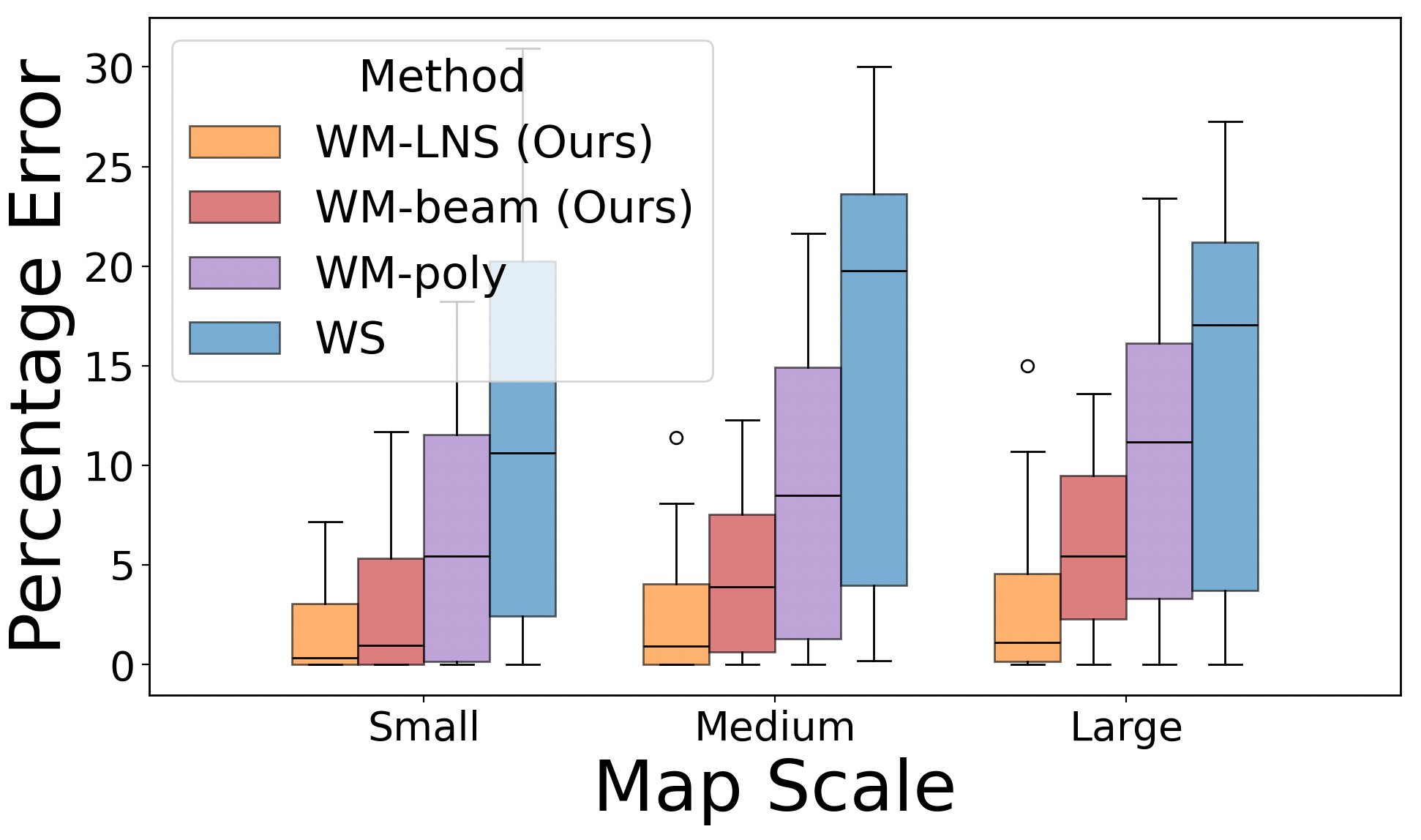}
    \caption{Simulation Results for Instance 2 (Cluttered Boxes).}
    \label{fig:instance-2}
  \end{subfigure}

  \caption{Performance of WM-LNS over two instances. Left plots show the computation time ratio over WS, and right plots show the percentage error from the optimal WM solution.}
  \label{fig:runtime-quality-analysis}
  \vspace{-2mm}
\end{figure}

\paragraph{Map 1 - Maze (Two Objectives)}
In this instance, we consider two objectives: path length and obstacle distance. The PRMs on this map range from 1500 to 2500 vertices. 

On the small map, WM takes on average 3.5 seconds, and the suboptimal and WM-LNS solvers take on average 1.8 seconds; however, on the large map, these times increase to 41.4 seconds and 9.1 seconds, respectively, resulting in a \(4.5\times\) speedup on large maps. In terms of optimality, we observe that WM-LNS and WM-beam are the most competitive, with a mean percentage error ranging from 0.3\% to 1.6\%. However, WM-LNS achieves the lowest percentage error on medium- and large-sized maps, with a mean percentage error of 0.7\% across both sizes.

\paragraph{Map 2 - Cluttered Boxes (Three Objectives)}
Next, we investigate how the performance of each algorithm scales with an increased number of objectives. We consider path length, obstacle distance, and add the objective of minimizing risk. Illustrated in Figure \ref{fig:test_instances}, the red regions correspond to high risk, and the green region corresponds to low risk. The PRMs on this map range from 1000 to 1500 vertices.

The WM solver has an average runtime of 8.7 seconds on small maps, but this increases to  84.5 seconds on the large map, while the average runtime for WM-LNS and the suboptimal variants increases from 2.3 seconds to 6.9 seconds. The increased number of objectives also significantly impacts the solution quality of the suboptimal WM solvers. They rely on a budgeting mechanism to reduce computational effort. However, with more objectives, there exists a larger search space of non-dominated solutions that cannot be captured within smaller budgets. Therefore, to compete with the runtime of WM-LNS, they become significantly suboptimal, whereas WM-LNS remains robust in such situations. Most notably, on large maps, WM-LNS returns solutions \(12 \times\) faster than WM with a mean percentage error of 2.7\%.

\begin{figure}[t]
  \centering

  \begin{subfigure}[t]{0.48\linewidth}
    \centering
    \includegraphics[height=2.9cm,keepaspectratio]{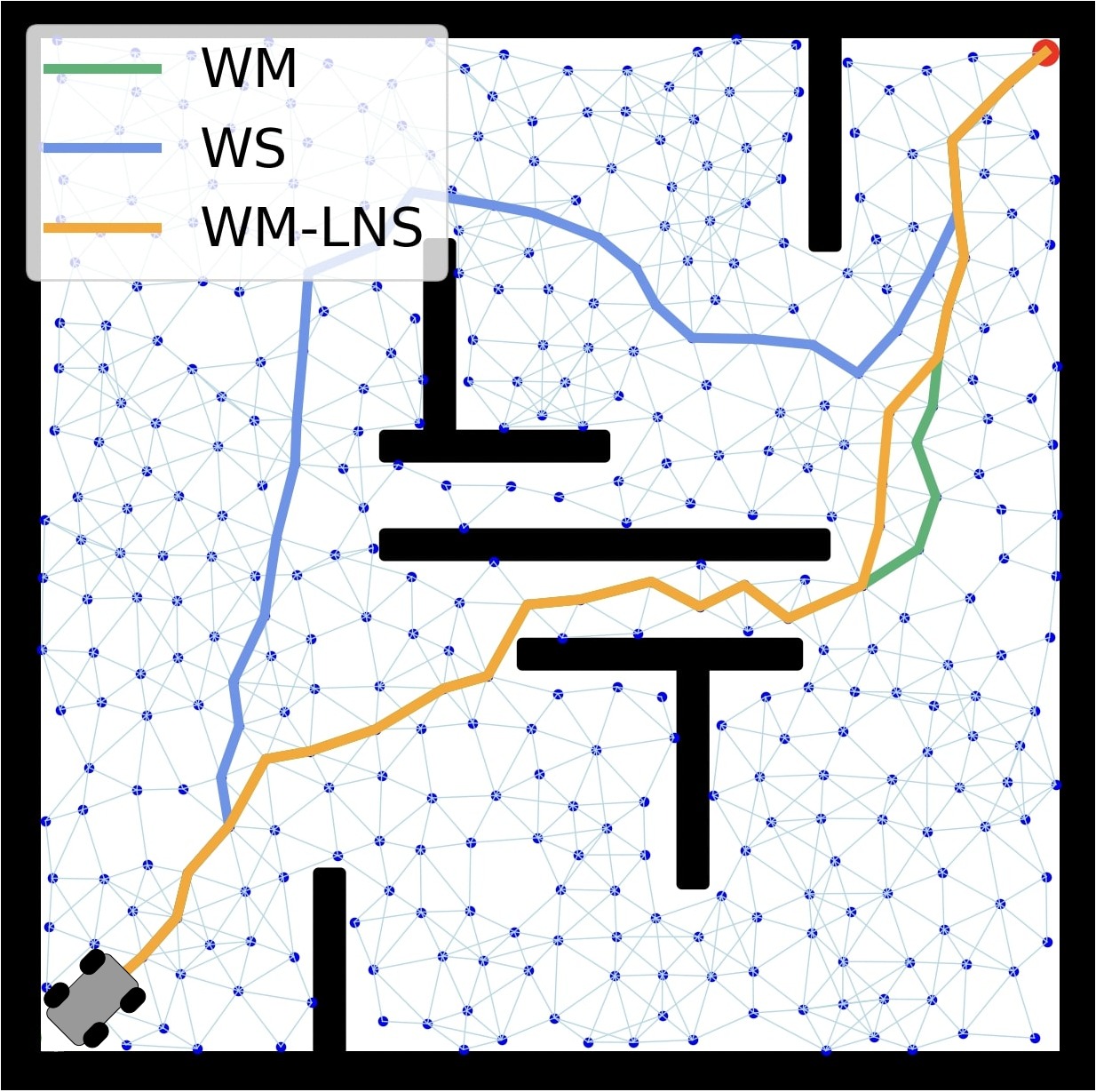}
  \end{subfigure}\hfill
  \begin{subfigure}[t]{0.48\linewidth}
    \centering
    \includegraphics[height=2.9cm,keepaspectratio]{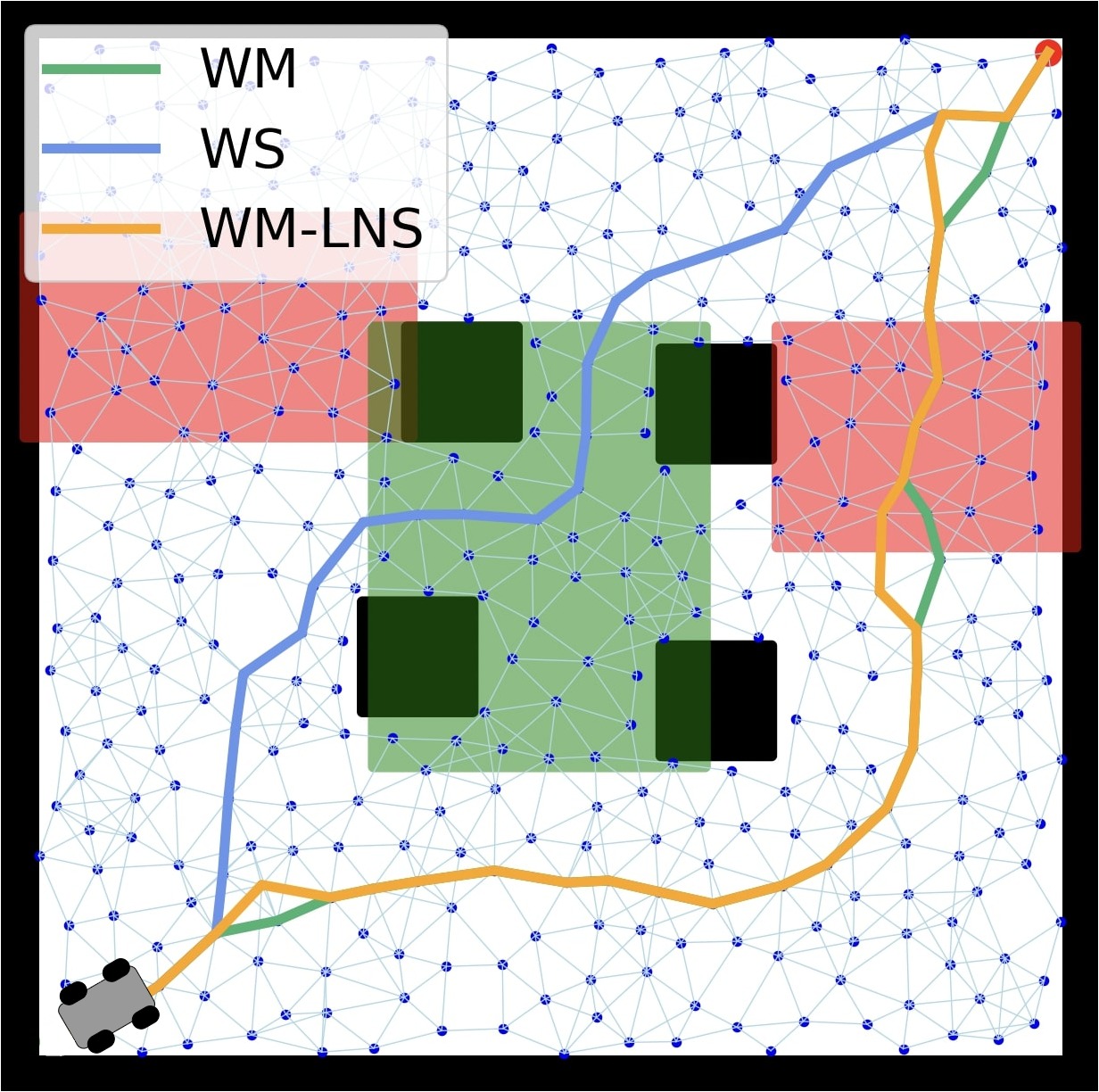}
  \end{subfigure}
  \caption{Test maps used in our experiments. Left: Map 1 - Maze (Two Objectives). Right: Map 2 - Cluttered Boxes (Three Objectives). Red regions correspond to high-risk zones, and green regions correspond to low-risk zones. The paths illustrate solutions for balanced preferences, where weights are chosen such that the weighted objective values are approximately equal.} 
  \label{fig:test_instances}
\end{figure}

\subsection{Planning on a Manipulator}
To demonstrate the broad applicability of the proposed method, we evaluate a multi-objective planning task on a 7-DOF Franka Emika Panda manipulator in the `Franka Kitchen' environment \cite{gupta2019relay}, \cite{kwon_kwonathan/franka-kitchen-pybullet_2025}. We construct a PRM in the robot's task space with 7500 nodes. Over 25 trials, we randomize the start location on the leftmost countertop and the goal location near the sink (see Figure \ref{fig:franka-paths}). The objectives are to minimize path length, proximity to the stove top, and height of the end effector. Similar to the experiments in Section \ref{sec:runtime-quality}, each trial is solved using weights chosen such that the weighted objective values are approximately equal.   

\begin{figure}[t]
  \centering

  \begin{subfigure}[t]{0.48\linewidth}
    \centering
    \includegraphics[height=2.4cm,keepaspectratio]{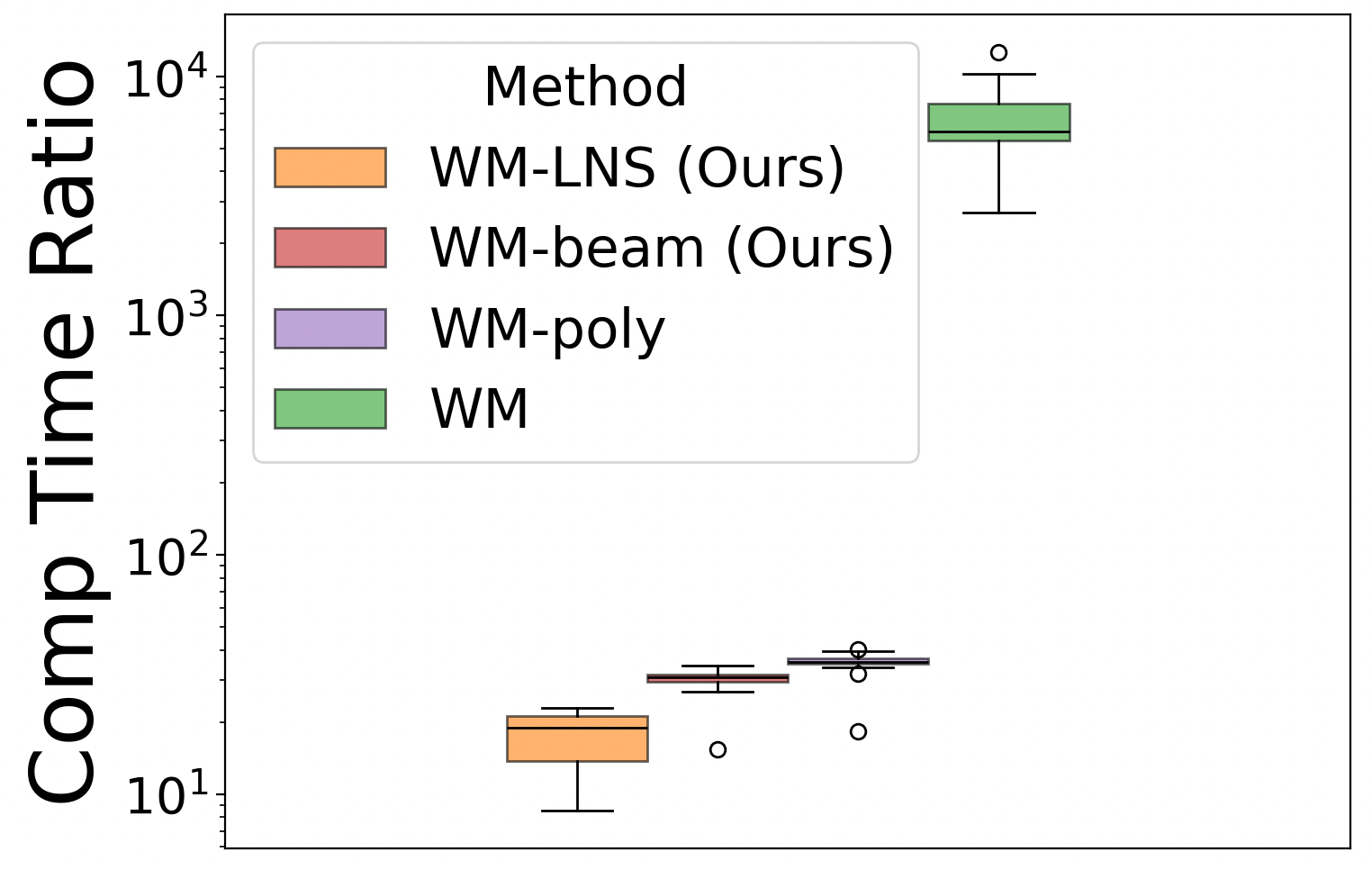}
    \label{fig:franka-comp}
  \end{subfigure}\hfill
  \begin{subfigure}[t]{0.48\linewidth}
    \centering
    \includegraphics[height=2.4cm,keepaspectratio]{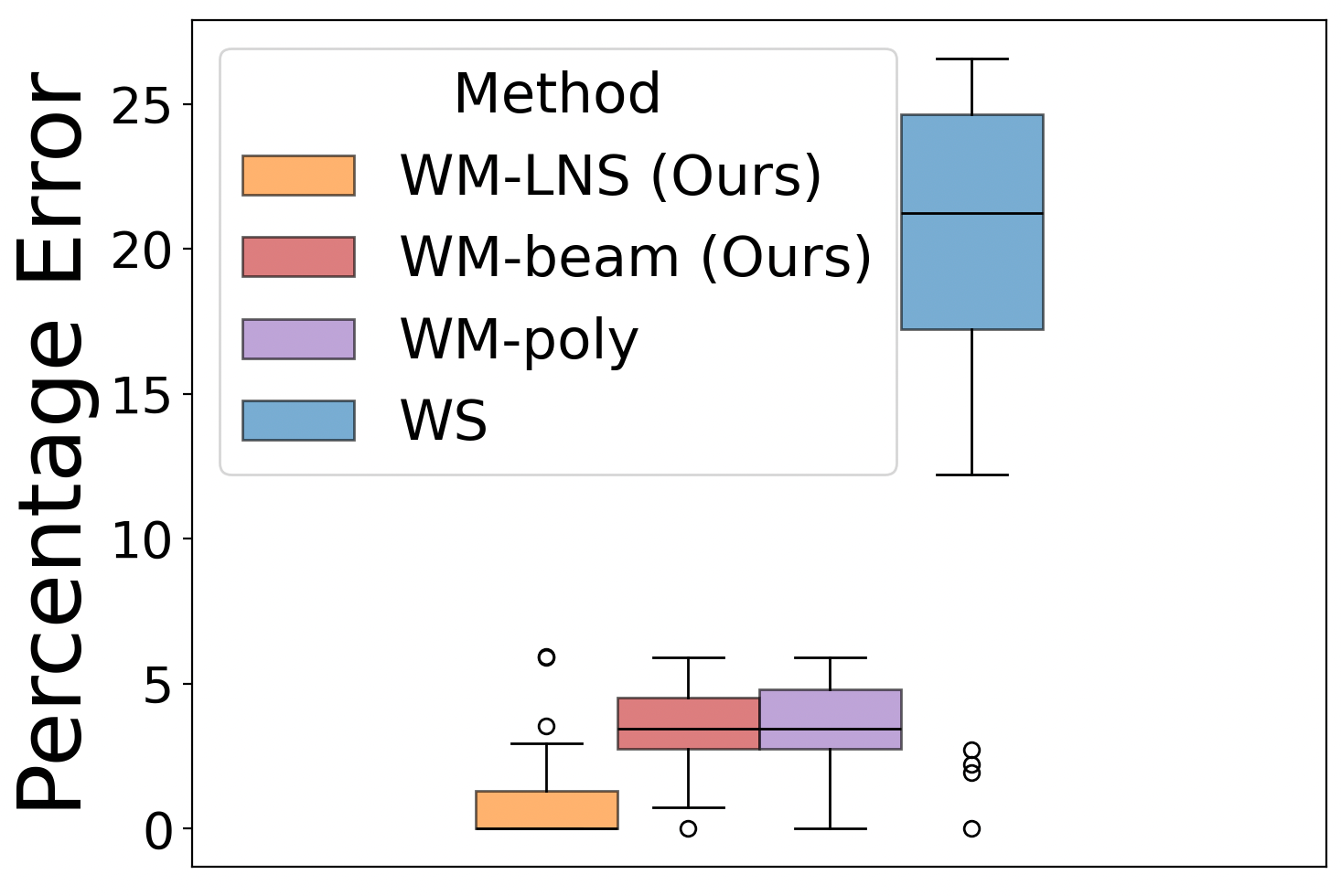}
    \label{fig:franka-cost}
  \end{subfigure}

  \caption{Manipulator planning results. The left plot shows the computation time ratio over WS, and the right plot shows the percentage error from the optimal WM solution.}
  \label{fig:franka-runtime-quality-results}
  \vspace{-2mm}
\end{figure}

WM-LNS averages 2.7 seconds while the WM solver takes 1048.4 seconds, resulting in a \(380\times\) speedup. Across all baselines, WM-LNS achieves the lowest percentage error with a mean value of 1.06\%.

\begin{figure}[t]
    \centering
    \includegraphics[height=3.8cm]{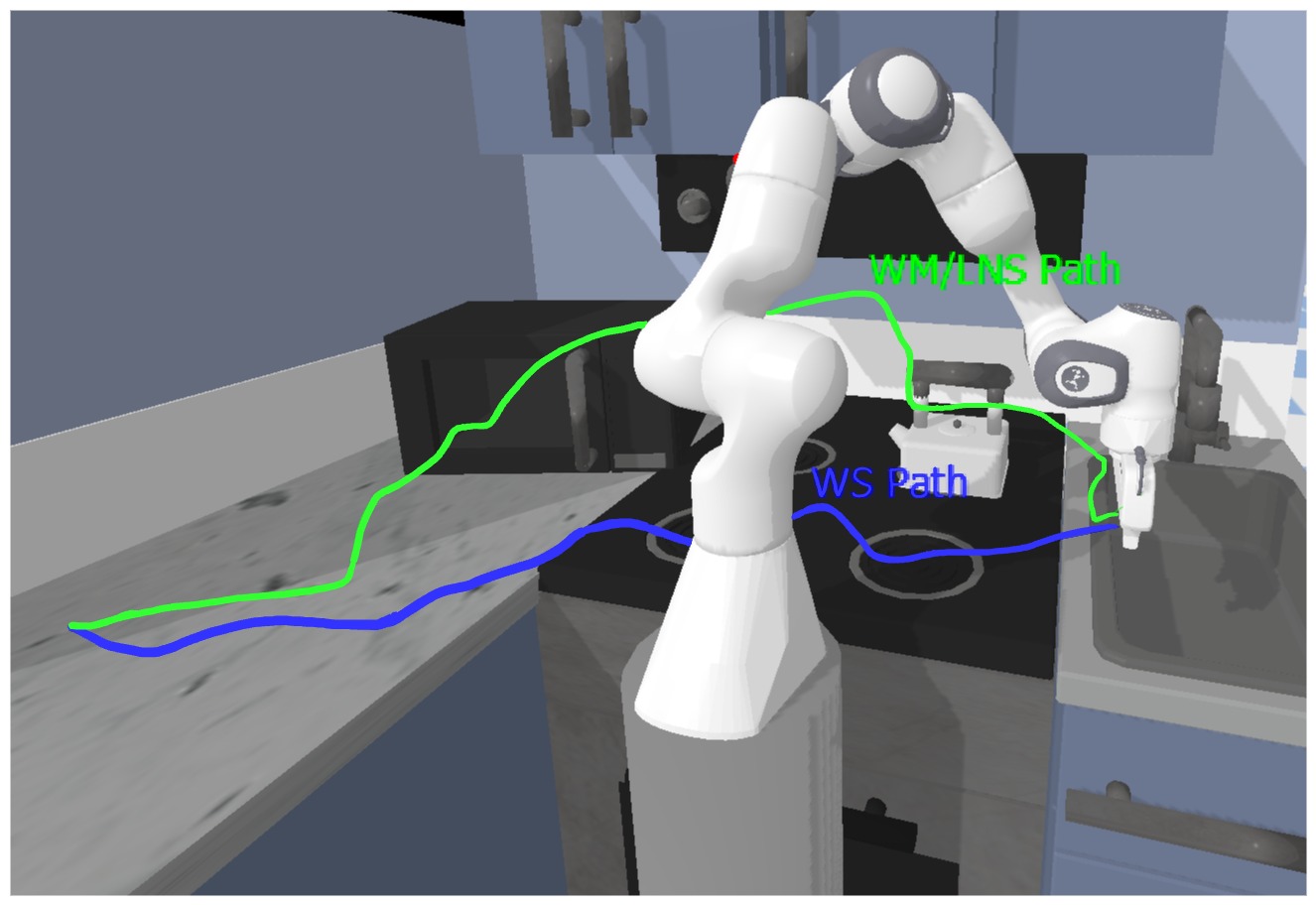}
    \caption{\small Sample results for multi-objective planning in a kitchen environment. The paths illustrate solutions for balanced preferences, where weights are chosen such that the weighted objective values are approximately equal. WM and WM-LNS paths are shown in green (identical), while the WS path is blue.}
    \label{fig:franka-paths}
    \vspace{-2mm}
\end{figure}

\section{Conclusion}
In this work, we studied multi-objective path planning problems scalarized with a WM cost function. We proposed WM-LNS, a heuristic search approach that is capable of generating diverse types of solutions and trade-offs, many of which the widely used WS cannot recover. Across extensive simulations, we also demonstrated that WM-LNS produced near-optimal solutions to the WM problem with a runtime improvement of 1-2 orders of magnitude, outperforming relevant baselines in large search spaces. In future work, we plan to make the selection of neighbourhoods and repair weights more informative, to further improve convergence toward the WM optimum. Another promising direction is to further formalize structural connections between the WM problem and simpler proxy problems (e.g. WS) to guide the design of deterministic approximation algorithms.

\appendices
\section{Hyperparameter Settings}
\label{sec:hyper-param}

\label{app:params}
As WM-LNS operates under a Large Neighbourhood Search framework, we outline the hyperparameters that affect its performance. We initialized the majority of our values from existing literature \cite{ropke_adaptive_2006, audet_derivative-free_2017} and established the remaining initial values empirically. We then performed a local search to select values that yielded the best overall performance. These values were fixed for all experiments except where noted. We use lower iteration limits for GPS-guided (Generalized Pattern Search) sampling and higher iteration limits for random sampling. Manipulator experiments use a beam width of 2. Table \ref{tab:lns_params} lists the full configuration. 

\begin{table}[h]
\caption{WM--LNS Hyperparameters}
\label{tab:lns_params}
\centering
\setlength{\tabcolsep}{6pt}
\begin{tabular}{ll}
\toprule
\textbf{Hyperparameter} & \textbf{Setting} \\
    \midrule
    Max num. iterations & \{75, 400\} \\
    Non-improving Limit & \{25, 50\} \\
    Min. Removed Vertices & \(0.05 \times \text{Path length}\) \\
    Max. Removed Vertices & \(0.95 \times \text{Path length}\) \\
    Cooling Rate \(c\) & 0.985 \\
    Start temp control \(p\%\) & 50\% \\
    Reheat Iteration & \(0.95 \times \text{Non-improving Limit}\) \\
    ALNS window \(I\) & 50 \\
    ALNS Reaction Factor \(\gamma\) & 0.75 \\
    Global Improvement \(\sigma_1\) & 15 \\
    Local Improvement \(\sigma_2\) & 3 \\
    Accepted Deterioration \(\sigma_3\) & 1 \\ 
    Initial Beam Width & \{1, 2\} \\ 
    GPS Iterations & 2 \\ 
    GPS Max Initial Step Size \(\delta_{\max}\) & 0.25 \\
    GPS Min Initial Step Size \(\delta_{\min}\) & 0.125 \\
    GPS Mesh Increase Factor & 2 \\
    GPS Mesh Decrease Factor & 0.25 \\
\bottomrule
\end{tabular}
\end{table}



\end{document}